\documentclass[10pt]{article} 
\usepackage[accepted]{tmlr}


\usepackage{hyperref}
\usepackage{url}
\usepackage{booktabs}
\usepackage{subcaption}

\title{Cooperative Online Learning
with Feedback Graphs}


\author{\name Nicol\`o Cesa-Bianchi \email nicolo.cesa-bianchi@unimi.it \\
      \addr Politecnico di Milano \& Università degli Studi di Milano, Milano, Italy
      \AND
      \name Tommaso Cesari \email tcesari@uottawa.ca \\
      \addr School of Electrical Engineering and Computer Science,
      University of Ottawa,
      Ottawa, Canada
      \AND
      \name Riccardo Della Vecchia \email ric.della.vecchia@gmail.com \\
     \addr Inria, Universit\'e de Lille, CNRS, Centrale Lille, UMR 9189 – CRIStAL}





\usepackage{microtype}
\usepackage{amsmath, amsthm, amssymb, mathtools, mathrsfs, graphicx, url}

\usepackage{enumitem}
\setlist{nosep}
\usepackage{amsmath,amsthm,amssymb,mathtools}
\usepackage{mathrsfs,mathtools}
\usepackage{nicefrac}
\usepackage{hyperref}
\usepackage{cleveref}
\usepackage{nicefrac}
\usepackage{amsmath, amsthm}
\usepackage{natbib}
\bibliographystyle{unsrtnat}
\usepackage[algo2e,ruled]{algorithm2e}
\usepackage{xcolor} 
\usepackage{enumitem} 
\usepackage{footmisc}

\usepackage{caption}
\usepackage{subcaption}

\newcommand{\cA}{\mathcal{A}}

\newcommand{\cE}{\mathcal{E}}

\newcommand{\cG}{\mathcal{G}}
\newcommand{\cH}{\mathcal{H}}
\newcommand{\cI}{\mathcal{I}}

\newcommand{\cL}{\mathcal{L}}

\newcommand{\cN}{\mathcal{N}}
\newcommand{\cO}{\mathcal{O}}

\newcommand{\cT}{\mathcal{T}}

\newcommand{\cV}{\mathcal{V}}


\newcommand{\E}{\mathbb{E}}

\newcommand{\I}{\mathbb{I}}

\newcommand{\bbP}{\mathbb{P}}

\newcommand{\R}{\mathbb{R}}

\newcommand{\e}{\varepsilon}

\newcommand{\lrb}[1]{\left(#1\right)}
\newcommand{\brb}[1]{\bigl(#1\bigr)}
\newcommand{\Brb}[1]{\Bigl(#1\Bigr)}

\newcommand{\lsb}[1]{\left[#1\right]}
\newcommand{\bsb}[1]{\bigl[#1\bigr]}


\newcommand{\bcb}[1]{\bigl\{#1\bigr\}}



\newcommand{\labs}[1]{\left\lvert#1\right\rvert}

\newcommand{\lno}[1]{\left\lVert#1\right\rVert}


\DeclareMathOperator*{\argmin}{argmin}

\newcommand{\m}{\setminus}
\newcommand{\s}{\subset}

\newcommand{\iop}{\infty}

\newcommand{\fracc}[2]{#1/#2}


\newcommand{\suppl}{Supplementary Material}
\newcommand{\Gnet}{N}
\newcommand{\Vnet}{A}
\newcommand{\Enet}{E_{\Gnet}}
\newcommand{\deltanet}{\delta_{\Gnet}}
\newcommand{\dnet}{n}
\newcommand{\alphanet}{\alpha_{\dnet}(\Gnet)}

\newcommand{\cNnet}{\cN^{\Gnet}_{\dnet}}
    \newcommand{\Gfeed}{F}
    \newcommand{\Vfeed}{K}
    \newcommand{\Efeed}{E_{\Gfeed}}

    \newcommand{\dfeed}{1}
    \newcommand{\alphafeed}{\alpha(\Gfeed)}
    
    \newcommand{\cNfeed}{\cN^{\Gfeed}_{\dfeed}}
\newcommand{\Oni}{Oblivious network interface}
\newcommand{\oni}{oblivious network interface}
\newcommand{\lhat}{\widehat{\ell}}
\newcommand{\is}{i^\star}

\newcommand{\cx}{\mathbb{X}}
\newcommand{\bx}{\boldsymbol x}
\newcommand{\bth}{\boldsymbol{\theta}}
\newcommand{\bzero}{\boldsymbol{0}}
\newcommand{\bw}{\boldsymbol{w}}
\newcommand{\bu}{\boldsymbol{u}}
\newcommand{\bc}{\boldsymbol{c}}
\newcommand{\bC}{\boldsymbol{C}}

\newcommand{\ouralg}{\textsc{Exp3-$\alpha^2$}}
\newcommand{\alg}{\textsc{alg}}
\newcommand{\changes}[1]{#1}
\newcommand{\Bin}{\mathrm{Bin}}

\newcommand{\Rsingle}{R^{\mathrm{sa}}}

\newlength{\minipagewidth}
\setlength{\minipagewidth}{\columnwidth}
\setlength{\fboxsep}{3mm}
\addtolength{\minipagewidth}{-\fboxrule}
\addtolength{\minipagewidth}{-\fboxrule}
\addtolength{\minipagewidth}{-\fboxsep}
\addtolength{\minipagewidth}{-\fboxsep}


\newtheorem{theorem}{Theorem}
\newtheorem{lemma}{Lemma}
\newtheorem{definition}{Definition}
\newtheorem*{definition*}{Definition}

\newtheorem*{proposition*}{Proposition}

\newtheorem*{corollary*}{Corollary}
\newtheorem*{theorem*}{Theorem}
\newtheorem*{lemma*}{Lemma}
\newtheorem{example}{Example}
\newtheorem{assumption}{Assumption}

\allowdisplaybreaks

\makeatletter
\newtheorem*{rep@theorem}{\rep@title}
\newcommand{\newreptheorem}[2]{%
	\newenvironment{rep#1}[1]{%
		\def\rep@title{\textbf{#2} \ref{##1}}%
		\begin{rep@theorem}}%
		{\end{rep@theorem}}}
\makeatother
\newreptheorem{theorem}{Theorem}
\newreptheorem{lemma}{Lemma}
\newreptheorem{proposition}{Proposition}
\newreptheorem{assumption}{Assumption}
\newreptheorem{corollary}{Corollary}
\newreptheorem{definition}{Definition}
\newreptheorem{example}{Example}


\begin{document}

\maketitle

\begin{abstract}
We study the interplay between communication and feedback in a cooperative online learning setting, where a network of communicating agents learn a common sequential decision-making task through a feedback graph. We bound the network regret in terms of the independence number of the strong product between the communication network and the feedback graph. Our analysis recovers as special cases many previously known bounds for cooperative online learning with expert or bandit feedback. We also prove an instance-based lower bound, demonstrating that our positive results are not improvable except in pathological cases. Experiments on synthetic data confirm our theoretical findings.
\end{abstract}

\section{Introduction}

Nonstochastic online learning with feedback graphs \citep{mannor2011bandits} is a sequential decision-making setting in which, at each decision round, an oblivious adversary assigns losses to all actions in a finite set. What the learner observes after choosing an action is determined by a feedback graph defined on the action set. Unlike bandit feedback, where a learner choosing an action pays and observes the corresponding loss, in the feedback graph setting the learner also observes (without paying) the loss of all neighboring actions in the graph. Special cases of this setting are prediction with expert advice (where the graph is a clique) and multiarmed bandits (where the graph has no edges). The Exp3-SET algorithm \citep{alon2017nonstochastic} is known to achieve a regret scaling with the square root of the graph independence number, and this is optimal up to a logarithmic factor in the number of actions. In recommendation systems, feedback graphs capture situations in which a user's reaction to a recommended product allows the system to infer what reaction similar recommendations would have elicited in the same user, see \cite{alon2017nonstochastic} for more examples.

Online learning has been also investigated in distributed settings, in which a network of cooperating agents solves a common task. At each time step, some agents become active, implying that they are requested to make predictions and pay the corresponding loss. Agents cooperate through a communication network by sharing the feedback obtained by the active agents. The time this information takes to travel the network is taken into account: a message broadcast by an agent is received by another agent after a delay equal to the shortest path between them. Regret in cooperative online learning has been previously investigated only in the full-information setting \citep{cesa2020cooperative,hsieh2022multi} and in the bandit setting \citep{cesa2019delay,bar2019individual}.



In this work we provide a general solution to the problem of cooperative online learning with feedback graphs. In doing so, we generalize previous approaches and also clarify the impact on the regret of the mechanism governing the activation of agents.
%
Under the assumption that agents are stochastically activated, our analysis captures the interplay between the communication graph (over the agents) and the feedback graph (over the actions), showing that the network regret scales with the independence number of the strong product between the communication network and the feedback graph.
%

More precisely, we design a distributed algorithm, \ouralg, whose average regret $R_T/Q$ (where $Q$ is the expected number of simultaneously active agents) on any communication network $\Gnet$ and any feedback graph $\Gfeed$ is (up to log factors)
\begin{equation}
\label{eq:upper}
    \frac{R_T}{Q}
\overset{\cO}{=}
    \sqrt{ \left( \frac{\alpha\big({\Gnet^{\dnet} \boxtimes \Gfeed}\big)}{Q} + 1 + \dnet \right) T } \;,
\end{equation}
where $T$ is the horizon, $\dnet$ is the diffusion radius (the maximum delay after which feedback is ignored), and $\alpha\big({\Gnet^{\dnet} \boxtimes \Gfeed}\big)$ is the independence number of the strong product between the $\dnet$-th power $\Gnet^{\dnet}$ of the communication network $\Gnet$ and the feedback graph $\Gfeed$. 
We also prove a near-matching instance-dependent lower bound showing that, with the exception of pathological cases, for any pair of graphs $(\Gnet,\Gfeed)$, no algorithm \changes{run with an oblivious network interface} can have regret smaller than $\sqrt{\brb{ \alpha ( \Gnet^{\dnet} \boxtimes \Gfeed ) / Q } T }$.

Our results hold for any diffusion radius $n$, which serves as a parameter to control the message complexity of the protocol. When $n$ is equal to the diameter of the communication network, then every agent can communicate with every other agent. Our protocol is reminiscent of the \textsc{local} communication model in distributed computing \citep{linial1992locality, suomela2013survey}, where the output of a node depends only on the inputs of other nodes in a constant-size neighborhood of it, and the goal is to derive algorithms whose running time is independent of the network size. Although our tasks have no completion time, in our model each node is directly influenced only by a constant-size neighborhood around it.

Let $|\Vnet|$ and $|\Vfeed|$ be, respectively, the number of agents and actions.
When $Q = |\Vnet|$ (all agents are always active) and $\Gfeed$ is the bandit graph (no edges), then $\alpha ( \Gnet^{\dnet} \boxtimes \Gfeed ) = \labs \Vfeed\alpha (\Gnet^{\dnet})$ and we recover the bound $\sqrt{ \brb{ \alpha (\Gnet^{\dnet} ) \fracc{\labs{\Vfeed}}{\labs{\Vnet}} + 1 + \dnet } T }$ of \cite{cesa2019delay}. 
When $\dnet=1$ and $\Gfeed$ is the expert graph (clique), then $\alpha (\Gnet^{\dnet} \boxtimes \Gfeed ) = \alpha(\Gnet)$ and we recover the bound $\sqrt{\brb{\fracc{\alpha(\Gnet)}{Q} + 1}T}$ of \cite{cesa2020cooperative}\footnote{
This is a reformulation of the bound originally proven by \cite{cesa2020cooperative}, see Section~\ref{s:claire} of the \suppl{} for a proof.
}.
Interestingly, if all agents were always active in \cite{cesa2020cooperative}, the graph topology would become irrelevant in the expert setting, resulting in a simplified regret bound of $\mathcal{O}(\sqrt{T})$, analogous to the case of a clique graph. This starkly contrasts with the bandit case of \cite{cesa2019delay}, where even when all agents are active simultaneously, the graph topology explicitly appears in the regret bound. 
Finally, in the non-cooperative case ($\Gnet$ is the bandit graph), we obtain $\sqrt{|\Vnet|\alpha(\Gfeed)\fracc{T}{Q}}$ which, for $|\Vnet|=1$ and $Q=1$, recovers the bound of \cite{alon2017nonstochastic}. \Cref{t:table} summarizes all known bounds (omitting log factors and setting, for simplicity, $Q=1$ and $\dnet=0$).
%

\begin{table}[h]
\centering
{\small
\begin{tabular}{lcccc}
\toprule
\ & \multicolumn{2}{c}{$|\Gnet| = 1$} & \multicolumn{2}{c}{Any $\Gnet$} \\
\midrule
$\Gfeed =$ clique (experts) & $\sqrt{T}$ & \citep{freund1997decision} & $\sqrt{\alpha(\Gnet) T}$ & \citep{cesa2020cooperative} \\
$\Gfeed =$ no edges (bandits) & $\sqrt{|K|T}$ & \citep{auer2002nonstochastic} & $\sqrt{\alpha(\Gnet) |K|T}$ & \citep{cesa2019delay} \\
Any $\Gfeed$ & $\sqrt{\alpha(\Gfeed)T}$ & \citep{alon2017nonstochastic} & $\sqrt{\alpha(\Gnet\boxtimes\Gfeed)T}$ & (this work) \\
\bottomrule
\end{tabular}
\caption{\label{t:table}\changes{Known bounds in online learning with feedback graphs and cooperative online learning.}}
}
\end{table}

%
\changes{Our theory is developed under the}
so-called oblivious network interface; i.e., when agents are oblivious to the global network topology and run an instance of the same algorithm using a common initialization and a common learning rate for their updates. {In this case, the stochastic activation assumption is necessary to not incur linear regret $R_T=\Omega(T)$ \cite{cesa2020cooperative}.}

Our core and main technical contributions are presented in Lemma~\ref{l:graph} and \Cref{t:upper} for the upper bound, and in \Cref{l:lower} and \Cref{t:lower} for the lower bound. Lemma~\ref{l:graph}, implies that the second moment of the loss estimates is dominated by the independence number of the strong product between the two graphs. The proof of this result generalizes the analysis of \cite[Lemma~3]{cesa2019delay}, identifying the strong product as the appropriate notion for capturing the combined effects of the communication and feedback graphs.
In \Cref{t:upper}, we present a new analysis, in the distributed learning setting, of the ``drift'' term arising from the decomposition of network regret. 
This is obtained by combining Lemma~\ref{l:graph} with a regret analysis technique developed by \citet{gyorgy2021adapting} for a single agent.
The proof of the lower bound in \Cref{t:lower} builds upon a new reduction to the setting of \Cref{l:lower} that we prove in \Cref{a:lower}. \Cref{l:lower} contains a lower bound for a single-agent setting with a feedback graph and oblivious adversary where every time step is independently skipped with a known and constant probability $q$. This reduction is new and necessary, since it is not enough to claim that the average number of rounds played is $qT$ and plug this in the lower bound for bandit with feedback graphs. In fact, one needs to build an explicit assignment of $\ell_{1},\dots,\ell_{T}$ such that by averaging over the random subset of active time steps it is possible to prove the lower bound under the conditions detailed in \Cref{l:lower}. In Section~\ref{app:expe}, we corroborate our theoretical results with experiments on synthetic data.

\section{Further related work}
\label{s:further}
%
%
\paragraph{Adversarial losses.}
A setting closely related to ours is investigated by
\cite{herbster2021gang}. However, they assume that the learner has full knowledge of the communication network---a weighted undirected graph---and provide bounds for a harder notion of regret defined with respect to an unknown smooth function mapping users to actions.
\cite{NEURIPS2019_7283518d} bound the individual regret (as opposed to our network regret) in the adversarial bandit setting of \cite{cesa2019delay}, in which all agents are active at all time steps. 
Their results, as well as the results of \cite{cesa2019delay}, have been extended to cooperative linear bandits by \cite{ito2020delay}.
\cite{della2021efficient} study cooperative linear semibandits and focus on computational efficiency.
\cite{dubey2020kernel} show regret bounds for cooperative contextual bandits, where the reward obtained by an agent is a linear function of the contexts.
\citet{nakamura2023cooperative} consider cooperative bandits in which agents dynamically join and leave the system.

\paragraph{Stochastic losses.}
Cooperative stochastic bandits are also an important topic in the online learning community.
\citet{kolla2018collaborative} study a setting in which all agents are active at all time steps. In our model, this corresponds to the special case where the feedback graph is a bandit graph (no edges) and the activation probabilities $q(v)$ are equal to $1$ for all agents $v$. More importantly, however, they focus on a stochastic multi-armed bandit problem. Hence, even restricting to the special cases of bandits with simultaneous activation, their algorithmic ideas cannot be directly applied to our adversarial setting.
Other recently studied variants of cooperative stochastic bandits consider agent-specific restrictions on feedback \citep{chen2021cooperative}
or on access to arms \citep{yang2022distributed},
bounded communication \citep{martinez2019decentralized},
corrupted communication \citep{madhushani2021one},
heavy-tailed reward distributions \citep{dubey2020cooperative},
stochastic cooperation models \citep{chawla2020gossiping},
strategic agents \citep{dubey2020private},
and Bayesian agents \citep{lalitha2021bayesian}.
{Multi-agent bandits have been also studied in federated learning settings with star-shaped communication networks \citep{he2022simple,li2022asynchronous} in the presence of adversarial (as opposed to stochastic) agent activations.}
Finally, \cite{liu2021bandit} investigate a decentralized stochastic bandit network for matching markets.


\section{Notation and setting}
\label{s:notation}

Our graphs are undirected and contain all self-loops. 
For any undirected graph $G = (V,E)$ and all $m\ge 0$,
we let $\delta_G(u,v)$ be the \emph{shortest-path distance} (in $G$) between two vertices $u,v\in V$,
$G^m$ the $m$-th power of $G$ (i.e., the graph with the same set of vertices $V$ of $G$ but in which two vertices $u,v\in V$ are adjacent if and only if $\delta_G(u,v)\le m$),
$\alpha(G)$ the \emph{independence number} of $G$ (i.e., the largest cardinality of a subset $I$ of $V$ such that $\delta_G(u,v)>1$ for all distinct $u,v\in I$), 
and $\cN^G(v)$ the \emph{neighborhood} $\{ u \in V : \delta_G(u,v) \le 1\}$ of a vertex $v\in V$.
To improve readability, we sometimes use the alternative notations $\alpha_m(G)$ for $\alpha\big(G^m\big)$ and $\cN_m^G(v)$ for $\cN^{G^m}\!(v)$.
Finally, for any two undirected graphs $G=(V,E)$ and $G'=(V',E')$, we denote by $G\boxtimes G'$ their \emph{strong product}, defined as the graph with set of vertices $V\times V'$ in which $(v,v')$ is adjacent to $(u,u')$ if and only if $(v,v')\in \cN^G (u) \times \cN^{G'}(u')$.

An instance of our problem is parameterized by:
\begin{enumerate}[topsep=0pt,parsep=0pt,itemsep=0pt]
    \item A \textbf{communication network} $\Gnet = (\Vnet,\Enet)$ over a set $\Vnet$ of agents, and a {maximum communication delay} $\dnet\ge 0$, limiting the communication among agents.
    \item A \textbf{feedback graph} $\Gfeed = (\Vfeed, \Efeed)$ over a set $\Vfeed$ of actions.
    \item An \textbf{activation probability} $q(v)>0$ for each agent $v\in \Vnet$,\footnote{We assume without loss of generality that $q(v)\neq 0$ for all agents $v\in\Vnet$. 
    The definition of regret \eqref{e:regret} and all subsequent results could be restated equivalently in terms of the restriction $\Gnet' = (\Vnet', \Enet')$ of the communication network $\Gnet$, where $\Vnet'=\{v\in\Vnet : q(v) > 0\}$ and for all $u,v\in\Vnet'$, $(u,v)\in\Enet'$ if and only if $(u,v)\in\Enet'$.} determining the subset of agents incurring losses on that round. Let $Q=\sum_{v\in \Vnet} q(v)$ be the expected cardinality of this subset.
    \item A sequence $\ell_1,\ell_2,\dots \colon K \to [0,1]$ of \textbf{losses}, chosen by an oblivious adversary.
\end{enumerate}
We assume the agents do not know $\Gnet$ (see the \oni\ assumption introduced later). The only assumption we make is that each agent knows the pairs $\big(v,q(v)\big)$ for all agents $v$ located at distance $\dnet$ or less.\footnote{This assumption can be relaxed straightforwardly by assuming that each agent $v$ only knows $q(v)$, which can then be broadcast to the $\dnet$-neighborhood of $v$ as the process unfolds.}

The distributed learning protocol works as follows.
At each round $t=1,2,\ldots$, each agent $v$ is activated with a probability $q(v)$, independently of the past and of the other agents. Agents that are not activated at time $t$ remain inactive for that round.
Let $\cA_t$ be the subset of agents that are activated at time $t$.
Each $v \in \cA_t$ plays an action $I_t(v)$ drawn according to its current probability distribution $p_t(\cdot,v)$, is charged the corresponding loss $\ell_t\brb{I_t(v)}$, and then observes the losses $\ell_t(i)$, for any action $i \in \cN_{1}^{\Gfeed}\big(I_t(v)\big)$.
Afterwards, each agent $v\in\Vnet$ broadcasts to all agents $u \in \cN_{\dnet}^{\Gnet}(v)$ in its $\dnet$-neighborhood a feedback message containing all the losses observed by $v$ at time $t$ together with its current distribution $p_t(\cdot,v)$; any agent $u\in\cN_{\dnet}^{\Gnet}(v)$ receives this message at the end of round $t + \deltanet(v,u)$.
{Note that broadcasting a message in the $\dnet$-neighborhood of an agent $v$ can be done when $v$ knows only its $1$-neighborhood. Indeed, because messages are time-stamped using a global clock, $v$ drops any message received from an agent outside its $\dnet$-neighborhood. On the other hand, $v$ may receive more than once the same message sent by some agent in its $\dnet$-neighborhood. To avoid making a double update, an agent can extract from each received message the timestamp together with the index of the sender, and keep these pairs stored for $\dnet$ time steps.}

Each loss observed by an agent $v$ (either directly or in a feedback message) is used to update its local distribution $p_t(\cdot,v)$. 
To simplify the analysis, updates are postponed, i.e., updates made at time $t$ involve only losses generated at time $t - \dnet - 1$. This means that agents may have to store feedback messages for up to $\dnet+1$ time steps before using them to perform updates.

The online protocol can be written as follows.

\begin{mybox}{
At each round $t = 1, 2, \ldots$
\begin{enumerate}[topsep=0pt,parsep=0pt,itemsep=0pt]
    \item Each agent $v$ is independently activated with probability $q(v)$;
    \item Each active agent $v$ draws an action $I_t(v)$ from $\Vfeed$ according to its current distribution $p_t(\cdot,v)$, is charged the corresponding loss $\ell_t \brb{I_t(v)}$, and observes the set of losses
    $
    	\cL_t(v)
    =
		\bcb{ \brb{ i, \ell_t(i) } : i \in \cN_1^\Gfeed \brb{ I_t(v) } }
    $
    \item Each agent $v$ broadcasts to all agents $u \in \cN_{\dnet}^{\Gnet}(v)$ the feedback message 
    $
           \brb { t,\, v,\, \cL_t(v),\, p_t(\cdot,v) }
    $, where $\cL_t(v)=\varnothing$ if $v\notin \cA_t$
    
    \item Each agent $v$ receives the feedback message
    $
        \brb{ t-s,\, u,\, \cL_{t-s}(u),\, p_{t-s}(\cdot,u) }
    $
    from each agent $u$ such that $\deltanet(v,u)=s$, for all $s \in [\dnet]$
\end{enumerate}
}
\end{mybox}
Similarly to \cite{cesa2019delay}, we assume the feedback message sent out by an agent $v$ at time $t$ contains the distribution $p_t(\cdot,v)$ used by the agent to draw actions at time $t$. This is needed to compute the importance-weighted estimates of the losses, $b_t(i,v)$, see \eqref{e:update}.

The goal is to minimize the \emph{network regret}
\begin{equation}
    \label{e:regret}
    R_T
=
    \max_{i\in K}
    \E \lsb{
    \sum_{t=1}^T\sum_{v\in \cA_t} \ell_t\brb{I_t(v)}
    -
    \sum_{t=1}^T \labs{\cA_t} \ell_t(i)
    } \;,
\end{equation}
where the expectation is taken with respect to the activations of the agents and the internal randomization of the strategies drawing the actions $I_t(v)$. Since the active agents $\cA_t$ are chosen i.i.d.\ from a fixed distribution, we also consider the average regret
\[
    \frac{R_T}{Q}
=
    \frac{1}{Q} \E \lsb{
    \sum_{t=1}^T\sum_{v\in \cA_t} \ell_t\brb{I_t(v)}
    }
    -
    \min_{i\in K}\sum_{t=1}^T \ell_t(i) \;,
\]
where $Q = \E\big[|\cA_t|\big] > 0$ for all $t$.

In our setting, each agent locally runs an instance of the same online algorithm. We do not require any ad-hoc interface between each local instance and the rest of the network. In particular, we make the following assumption \citep{cesa2020cooperative}.
\begin{assumption}[\Oni]
An online algorithm \alg\ is run with an \emph{\oni} if:
\begin{enumerate}[topsep=0pt,parsep=0pt,itemsep=0pt]
\item each agent $v$ locally runs a local instance of \alg;
\item all local instances use the same initialization and the same strategy for updating the learning rates;
\item all local instances make 
updates while being oblivious to whether or not their host node $v$ was active and when.
\end{enumerate}
\end{assumption}
This assumption implies that each agent's instance is oblivious to both the network topology and the location of the agent in the network. Its purpose is to show that communication improves learning rates even without any network-specific tuning. In concrete applications, one might use ad-hoc variants that rely on the knowledge of the task at hand, and decrease the regret even further. 

\section{Upper bound}
\label{s:upper}

In this section, we introduce \ouralg{} (Algorithm~\ref{a:imp-weigh}), an extension of the \textsc{Exp3-Coop} algorithm by \cite{cesa2019delay}, and analyze its network regret when run with an \oni.

\begin{algorithm2e}
\DontPrintSemicolon
\SetAlgoNoLine
\SetAlgoNoEnd
\SetKwInput{kwIn}{input}
\kwIn{learning rates $\eta_1(v), \eta_2(v) \dots$}
\For
{%
    $t=1,2,\dots, \dnet+1$
}
{%
    if $v$ is active in this round, draw $I_t(v)$ from $\Vfeed$ uniformly at random
}
\For
{%
    $t \ge \dnet+2$
}
{%
    if $v$ is active in this round, draw $I_t(v)$ from $\Vfeed$ according to $p_t(\cdot,v)$ in \eqref{e:update}
}
\caption{\label{a:imp-weigh} \ouralg{} (Locally run by each agent $v \in \Vnet$)}
\end{algorithm2e}

An instance of \ouralg\ is locally run by each agent $v\in\Vnet$. 
The algorithm is parameterized by its (variable) learning rates $\eta_1(v), \eta_2(v),\dots$, which, in principle, can be arbitrary (measurable) functions of the history.
In all rounds $t$ in which the agent is active, $v$ draws an action $I_t(v)$ according to a distribution $p_t(\cdot,v)$.
For the first $\dnet+1$ rounds $t$, $p_t(\cdot,v)$ is the uniform distribution over $\Vfeed$.
During all remaining time steps $t$, the algorithm computes exponential weights using all the available feedback generated up to (and including) round $t-\dnet-1$.
More precisely, for any action $i\in \Vfeed$, 
\begin{equation}
\label{e:update}
\begin{split}
   p_t(i,v) 
& = 
    \textstyle\fracc{w_t(i,v)}{\lno{w_t(\cdot,v)}_1} \;,
\\
    w_t(i,v)
& = 
    \textstyle \exp\brb{-\eta_t(v) \sum_{s=1}^{t-\dnet-1} \lhat_s(i,v) } \;,
\\
    \lhat_s(i,v)
& = 
    \textstyle \ell_s(i) B_s(i,v)/b_s(i,v) \;,
\\
    B_s(i,v) 
& = 
    \textstyle \I \bcb{ \exists u \in \cNnet(v) : u \in \cA_s, I_s(u) \in \cNfeed(i) }  \;,
\\
    b_s(i,v) 
& = 
    \textstyle 1 - \prod_{u \in \cNnet(v)} \brb{ 1-q(u) \sum_{j\in \cNfeed(i) } p_s(j,u) } \;.
\end{split}
\end{equation}
The event $B_s(i,v)$ indicates whether an agent in the $\dnet$-neighborhood of $v$ played at time $s$ an action in the $\dfeed$-neighborhood of $i$. If $B_s(i,v)$ occurs, then agent $v$ can use $\lhat_s(i,v)$ to update the local estimate for the cumulative loss of $i$.
Note that $\lhat_s(i,v)$ are proper importance-weighted estimates, as $\E_{s-\dnet}\big[\lhat_s(i,v)\big] = \ell_s(i)$ for all $v \in \Vnet$, $i \in \Vfeed$, and $s > \dnet$. The notation $\E_{s-\dnet}$ denotes conditioning with respect to any randomness in rounds $1,\ldots,s-\dnet-1$.
Note also that when $q(u) = 1$ for all $u\in\Vnet$ and $\Gfeed$ is the edgeless graph, the probabilities $p_t(i,v)$ in~\eqref{e:update} correspond to those computed by \textsc{Exp3-Coop} \citep{cesa2019delay}.

Before analyzing our cooperative implementation of \ouralg, we present a key graph-theoretic result that helps us characterize the joint impact on the regret of the communication network and the feedback graph. Our new result relates the variance of the estimates of eq. \eqref{e:update} to the structure of the communication graph given by the strong product of $\Gnet^{\dnet}$  and $\Gfeed$. 
\begin{lemma}
\label{l:graph}
Let $\Gnet=(\Vnet,\Enet)$ and $\Gfeed=(\Vfeed,\Efeed)$ be any two graphs, $\dnet\ge 0$, $\brb{ q(v) }_{v \in \Vnet}$ a set of numbers in $(0,1]$, $Q = \sum_{v\in\Vnet}q(v)$, and $\brb{ p(i,v) }_{i\in\Vfeed,v\in\Vnet}$ a set of numbers in $(0,1]$ such that $\sum_{i\in\Vfeed}p(i,v)=1$ for all $v\in\Vnet$.
Then, 
\[
    \sum_{v\in\Vnet} \sum_{i\in\Vfeed} \frac{ q(v) p(i,v) }{ 1 - \prod_{u \in \cNnet(v)} \brb{ 1-q(u) \sum_{j\in \cNfeed(i) } p(j,u) } }
\le
    \frac{ 1 }{1-e^{-1}} \lrb{ \alpha\big(\Gnet^{\dnet} \boxtimes \Gfeed\big) + Q } \;.
\]
\end{lemma}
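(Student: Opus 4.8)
The plan is to reduce the stated bound to a single-graph ``independence-number'' inequality applied to the strong product $H := \Gnet^{\dnet}\boxtimes\Gfeed^{\dfeed}$, by first replacing the product in each denominator with a genuine neighborhood sum in $H$.

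First I would control each denominator from below by an affine quantity. For fixed $v,i$ write $x_u := q(u)\sum_{j\in\cNfeed(i)}p(j,u)$ for $u\in\cNnet(v)$; since $q(u)\le 1$ and $\sum_{j\in\cNfeed(i)}p(j,u)\le\sum_{j\in\Vfeed}p(j,u)=1$, each $x_u\in[0,1]$. Using $1-x\le e^{-x}$ and the elementary concavity bound $1-e^{-s}\ge(1-e^{-1})\min(1,s)$ valid for $s\ge 0$, I get $1-\prod_{u\in\cNnet(v)}(1-x_u)\ge(1-e^{-1})\min\bigl(1,S(v,i)\bigr)$, where $S(v,i):=\sum_{u\in\cNnet(v)}x_u$. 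This extracts the constant $1/(1-e^{-1})$ and reduces the claim to showing $\sum_{v,i}\frac{q(v)p(i,v)}{\min(1,S(v,i))}\le\alpha(H)+Q$.

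The crux is then to recognise $S(v,i)$ as a neighborhood sum in $H$. Assigning to each vertex $(u,j)\in\Vnet\times\Vfeed$ the weight $w(u,j):=q(u)p(j,u)$, and recalling that the neighborhood of $(v,i)$ in $\Gnet^{\dnet}\boxtimes\Gfeed^{\dfeed}$ is exactly $\cNnet(v)\times\cNfeed(i)$, one has $S(v,i)=\sum_{(u,j)\in\cNnet(v)\times\cNfeed(i)}w(u,j)$, i.e.\ the sum of $w$ over the (closed) $H$-neighborhood of $(v,i)$, while the numerator $q(v)p(i,v)=w(v,i)$ is the weight of $(v,i)$ itself. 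I would then split over $A=\{(v,i):S(v,i)>1\}$ and $B=\{(v,i):S(v,i)\le 1\}$. On $A$ the term equals $w(v,i)$, and $\sum_{(v,i)}w(v,i)=\sum_{v}q(v)\sum_i p(i,v)=Q$ bounds that part. On $B$ it becomes $\sum_{(v,i)\in B}\frac{w(v,i)}{S(v,i)}$, which I will bound by $\alpha(H)$.

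The main obstacle --- and the only step that genuinely uses the graph structure --- is this last inequality, which is the single-graph core lemma underlying Lemma~\ref{l:alpha+q-bound} (and Lemma~3 of \citet{cesa2016delay}), now invoked on the product graph $H$ rather than per action. I would prove it by a greedy charging argument: repeatedly pick a not-yet-removed vertex $x^\star$ of minimal $S(x^\star)$, add it to an independent set $I$, and delete its closed $H$-neighborhood. Since $S$ is computed against the fixed weights $w$ and never changes, every deleted $y$ satisfies $S(y)\ge S(x^\star)$, so the charge removed in a step is at most $\frac{1}{S(x^\star)}\sum_{y\in\cN_H(x^\star)}w(y)=\frac{S(x^\star)}{S(x^\star)}=1$; as each step adds one vertex to $I$, the total charge is at most $|I|\le\alpha(H)$. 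The delicate points I would verify are that deleting closed neighborhoods really yields an independent set (so $|I|\le\alpha(H)$), that every vertex of $B$ is charged exactly once, and that minimality of $S(x^\star)$ survives the deletions; none of these needs the restriction $S\le 1$, which serves only to identify $\min(1,S)$ with $S$ on $B$. Combining the two parts yields $\frac{1}{1-e^{-1}}\bigl(\alpha(H)+Q\bigr)$, as claimed.
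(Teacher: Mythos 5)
Your proposal is correct and follows essentially the same route as the paper's proof: the same weights $w(i,v)=q(v)p(i,v)$, the same split of the sum according to whether the neighborhood mass $S(v,i)=W(i,v)$ is below or above $1$, the bound by $Q$ on the large-mass part, and the reduction of the small-mass part to the neighborhood-sum/independence-number lemma (Lemma~\ref{l:alpha-bound}) applied to $\Gnet^{\dnet}\boxtimes\Gfeed^{\dfeed}$ via the identification $\cN_1^{H}(v,i)=\cNnet(v)\times\cNfeed(i)$. The only (harmless) deviation is that on the set $S\ge 1$ you get the constant $1-e^{-1}$ directly from $1-f\le e^{-S}\le e^{-1}$, where the paper instead runs an AM-GM optimization argument to reach the same bound.
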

\begin{proof}
Let $\bw = \brb{ w(i,v) }_{(i,v) \in \Vfeed\times\Vnet}$ where $w(i,v)=q(v)p(i,v)$, and  for all $(i,v)\in \Vfeed\times\Vnet$ set $W(i,v) = \sum_{(j,u) \in \cN_{\dfeed}^{\Gfeed}(i) \times \cN_{\dnet}^{\Gnet}(v)} w(j,u)$.
Define also, for all $\bc = \brb{ c(j,u) }_{(j,u) \in \Vfeed\times\Vnet} \in [0,1]^{\labs{\Vfeed}\times \labs{\Vnet}}$ and $(i,v)\in\Vfeed\times\Vnet$
\[
f_{\bc}(i,v) = 1 - \prod_{u \in \cN_{\dnet}^{\Gnet}(v)} \lrb{ 1-\sum_{j\in \cN_{\dfeed}^{\Gfeed}(i) } c(j,u) } \;.
\]
Then we can write the left-hand side of the statement of the lemma as
\begin{align*}
\sum_{(i,v)\in \Vfeed \times \Vnet} \frac{ w(i,v) }{ f_{\bw} (i,v) }
&=
    \underbrace{ \sum_{(i,v)\in \Vfeed \times \Vnet \,:\, W(i,v) < 1} \frac{ w(i,v) }{ f_{\bw} (i,v) } }_{\mathrm{(I)}}
\quad + \quad
    \underbrace{ \sum_{(i,v)\in \Vfeed \times \Vnet \,:\, W(i,v) \ge 1} \frac{ w(i,v) }{ f_{\bw} (i,v) } }_{\mathrm{(II)}}
\end{align*}
and proceed by upper bounding the two terms~(I) and~(II) separately. 
For the first term~(I), using the inequality $1-x \leq e^{-x}$ (for all $x\in \R$) with $x = w(j,u)$, we can write, for any $(i,v)\in \Vfeed \times \Vnet$,
\begin{align*}
    f_{\bw}(i,v)
\ge
    1-\exp\lrb{-\sum_{u \in \cN_{\dnet}^{\Gnet}(v)} \sum_{j\in\cN_{\dfeed}^{\Gfeed}(i)}w(j,u) }
=
    1-\exp \brb{ - W(i,v) } \;.
\end{align*}
Now, since in (I) we are only summing over $(i,v)\in \Vfeed \times \Vnet$ such that $W(i,v) < 1$, we can use the inequality $1-e^{-x} \geq (1-e^{-1})\,x$ (for all $x \in [0,1]$) with $x = W(i,v)$, obtaining
$
	f_{\bw}(i,v)
\ge 
	(1-e^{-1})W(i,v)
$,
and in turn
\begin{align*}
    \mathrm{(I)}
\le
    \sum_{(i,v)\in \Vfeed \times \Vnet \,:\, W(i,v) < 1} \frac{ w(i,v) }{ (1-e^{-1})W(i,v) }
\le
    \frac{1}{1-e^{-1}}\,\sum_{(i,v)\in \Vfeed \times \Vnet} \frac{ w(i,v) }{ W(i,v) }
\le
    \frac{\alpha\big(\Gnet^{\dnet} \boxtimes \Gfeed\big)}{1-e^{-1}} \;,
\end{align*}
where in the last step we used a known graph-theoretic result---see Lemma~\ref{l:alpha-bound} in the \suppl.
For the second term~(II): for all $v \in \Vnet$, let $r(v)$ be the cardinality of $\cN_{\dnet}^{\Gnet}(v)$. 
Then, for any $(i,v)\in \Vfeed \times \Vnet$ such that $W(i,v) \ge 1$,
\begin{align*}
&
	1 - f_{\bw}(i,v)
\le
    \max\left\{  1 - f_{\bc}(i,v) \,:\, \bc \in [0,1]^{\labs{\Vfeed}\times\labs{\Vnet}}, \, \sum_{(j,u) \in \cN_{\dfeed}^{\Gfeed}(i) \times \cN_{\dnet}^{\Gnet}(v)} c(j,u) \ge 1 \right\}
\\
&
=
    \max\left\{  \prod_{u \in \cN_{\dnet}^{\Gnet}(v)} \lrb{ 1-\sum_{j\in \cN_{\dfeed}^{\Gfeed}(i) } c(j,u) } \,:\, \bc \in [0,1]^{\labs{\Vfeed}\times\labs{\Vnet}}, \, \sum_{u  \in \cN_{\dnet}^{\Gnet}(v)} \sum_{j \in \cN_{\dfeed}^{\Gfeed}(i)} c(j,u) = 1 \right\}
\\
&
\le
    \max\left\{  \prod_{u \in \cN_{\dnet}^{\Gnet}(v)} \brb{ 1- C(u) } \,:\, \bC \in [0,1]^{\labs{\Vnet}}, \, \sum_{u  \in \cN_{\dnet}^{\Gnet}(v)} C(u) = 1 \right\}
\\
&
=
    \max\left\{  \prod_{u \in \cN_{\dnet}^{\Gnet}(v)} \brb{ 1- C(u) } \,:\, \bC \in [0,1]^{\labs{\Vnet}}, \, \sum_{u  \in \cN_{\dnet}^{\Gnet}(v)} \brb{ 1 - C(u) } = r(v) - 1 \right\}
\\
&
\le
    \left(1-\frac{1}{r(v)}\right)^{r(v)}
\le
    e^{-1} \;,
\end{align*}
where the first equality follows from the definition of $f_{\bc}(i,v)$ and the monotonicity of $x\mapsto 1-x$, the second-to-last inequality is implied by the AM-GM inequality (Lemma~\ref{l:amgm}), and the last one comes from $r(v) \ge 1$ (being $v \in \cN_{\dnet}^\Gnet(v)$). 
Hence
\begin{align*}
	\mathrm{(II)}
& 
=
    \sum_{(i,v)\in \Vfeed \times \Vnet \,:\, W(i,v) \ge 1} \frac{ w(i,v) }{ f_{\bw} (i,v) }
\le
	\!\!\!\sum_{(i,v)\in \Vfeed \times \Vnet \,:\, W(i,v) \ge 1} \frac{ w(i,v) }{ 1 - e^{-1} }
\\
&
\le
    \frac 1 {1-e^{-1}} \sum_{i\in \Vfeed} \sum_{v\in\Vnet} w(i,v)
=
    \frac 1 {1-e^{-1}} \sum_{v\in\Vnet} q(v) \sum_{i\in \Vfeed} p(i,v)
=
    \frac Q {1-e^{-1}} \;.
\end{align*}
\end{proof}
For a slightly stronger version of this result, see Lemma \ref{l:strong-product} in the \suppl{}.
By virtue of Lemma \ref{l:graph}, we can now show the main result of this section.

\begin{theorem}
\label{t:upper}
  If each agent $v\in\Vnet$ uses adaptive learning rates equal to $\eta_t(v)=0$ for $t\leq n+1$, $\eta_t(v) = \sqrt{\fracc{\log (K)}{\sum_{s=1}^t X_s(v)}}$ with $X_t(v)= 
n  + \sum_{i\in\Vfeed} \frac{ p_t(i,v) }{b_t(i,v)}
$ for $t>n+1$, the average network regret of \ouralg{} playing with an \oni{} can be bounded as
\begin{equation}
\label{eq:main}
    \frac{R_T}{Q}
\overset{\widetilde \cO}{=}
      \sqrt{ \log (K) \left(n+1+\frac{\alpha\left(N^n \boxtimes F\right)}{Q} \right) T}\;.
\end{equation}
\end{theorem}
\begin{proof}
    Let $\is\in \argmin_{i\in \Vfeed} \E \bsb{ \sum_{t=1}^T\labs{\cA_t}\ell_t(i)}$ where the expectation is with respect to the random sequence $\cA_t \subseteq \Vnet$ of agent activations.
We write the network regret $R_T$ as a weighted sum of single agent regrets $R_T(v)$: 
\begin{align*}
    R_T
=
    \sum_{v\in A} q(v) R_T(v)
=
    \sum_{v\in \Vnet} q(v)
    \E \lsb{
    \sum_{t=1}^T
    \sum_{i\in \Vfeed} \lhat_t(i,v) p_t(i,v) 
    -
    \lhat_t(\is,v)
    }\,,
\end{align*}
where the expectation is now only with respect to the random draw of the agents' actions, and it is separated from the activation probability $q(v)$.
Fix any agent $v \in \Vnet$.
\ouralg{} plays uniformly for the first $n+1$ rounds, and each agent, therefore, incurs a linear regret in this phase. For $t>n+1$ we borrow a decomposition technique from \citep{gyorgy2021adapting}: for any sequence $\brb{ \widetilde p_t (\cdot,v) }_{t>n+1}$ of distributions over $\Vfeed$, the above expectation can be written as
\begin{equation}
    \label{e:regr-decomp}
    \E \lsb{
    \sum_{t=n+2}^T
    \sum_{i\in \Vfeed} \lhat_t(i,v) \widetilde p_{t+1}(i,v) 
    -
    \lhat_t(\is,v)
    }
    +
    \sum_{t=n+2}^T
    \E \lsb{
    \sum_{i\in \Vfeed} \lhat_t(i,v) p_t(i,v)
    \lrb{ 1 - \frac{\widetilde p_{t+1}(i,v)}{p_{t}(i,v)} }
    }\;.
\end{equation}
Take now $\widetilde p_t(\cdot,v)$ as the (full-information) exponential-weights updates with non-increasing step-sizes $\eta_{t-1}(v)$
 for the sequence of losses $\lhat_t(\cdot,v)$.
That is, $\widetilde p_1(\cdot,v)$ is the uniform distribution over $\Vfeed$, and for any time step $t$ and action $i\in \Vfeed$, $\widetilde p _{t+1}(i,v) = \fracc{\widetilde w_{t+1}(i,v)}{\lno{\widetilde w_{t+1}(\cdot,v)}_1}$, where
$
    \widetilde w _{t+1}(i,v)
=
    \exp \brb{ -\eta_t(v) \sum_{s=1}^t \lhat_s(i,v) }
$.
With this choice, the first term in~\eqref{e:regr-decomp} is the ``look-ahead'' regret for the iterates $\widetilde p_{t+1}(\cdot,v)$ (which depend on $\lhat_t(\cdot,v)$ at time $t$), while the second one measures the drift of $p_t(\cdot,v)$ from $\widetilde p_{t+1}(\cdot,v)$.

Using an argument from \citep[Theorem 3]{joulani2020modular},\footnote{We use \citep[Theorem 3]{joulani2020modular} with $p_t = 0$ for all $t\in[T]$, $r_0 = (1/\eta_0(v))\sum_i p_i \ln(p_i)$, $r_t(p)=(1/\eta_{t}(v)-1/\eta_{t-1}(v)) \sum_i p_i \ln(p_i)$ for all $t\in [T]$, and dropping the Bregman-divergence terms due to the convexity of $r_t$.} we deterministically bound the first term in \eqref{e:regr-decomp}:
\begin{equation}
\label{e:look-ahead}
    \sum_{t=n+2}^T
    \sum_{i\in \Vfeed} \lhat_t(i,v) \widetilde p_{t+1}(i,v) 
    -
    \lhat_t(\is,v)
\le
    \frac{\ln \labs K}{\eta_T(v)}\;.
\end{equation}
The subtle part is now to control the second term in \eqref{e:regr-decomp}.
To do so, fix any $t>\dnet+1$. Note that for all $i\in \Vfeed$, 
\[
    w_t(i,v) 
=
    \exp\lrb{-\eta_t(v) \sum_{s=1}^{t-\dnet-1} \lhat_s(i,v) }
\ge
    \exp \lrb{ -\eta_t(v) \sum_{s=1}^t \lhat_s(i,v) }
= 
    \widetilde w_{t+1}(i,v)
\]
(using $\ell_s(i,v)\ge 0$ for all $s,i,v$),
which in turn, using the inequality $e^x\ge 1+x$ (for all $x\in\R$), yields
\[
    \frac{\widetilde p_{t+1}(i,v)}{p_t(i,v)}
\ge
    \frac{\widetilde w_{t+1}(i,v)}{w_t(i,v)}
=
    \textstyle\exp\lrb{ -\eta_t(v)\sum_{s=t-\dnet}^t \lhat_s(i,v) }
\geq 
    1 -\eta_t(v)\sum_{s=t-\dnet}^t \lhat_s(i,v) \;.
\]
Thus, we upper bound the second expectation in \eqref{e:regr-decomp} by
\begin{equation}
    \label{e:f(v)+g(v)}
    \sum_{i\in K}
    \E \lsb{
    \eta_t(v) \,\lhat_t(i,v) p_t(i,v) 
    \sum_{s=t-\dnet}^{t-1} 
    \lhat_s(i,v)
    }
    +
    \sum_{i\in K}
    \E \lsb{
    \eta_t(v) \,\lhat_t(i,v)^2 p_t(i,v)
    }
    \eqqcolon g^{(1)}_t(v) + g^{(2)}_t(v) \;.
\end{equation}
We study the two terms $g^{(1)}_t(v)$ and $g^{(2)}_t(v)$ separately.
Let then $\cH_t = \cH_t(v)$ be the $\sigma$-algebra generated by the activations of agents
and the actions drawn by them at times $1,\dots,t-1$, and let also indicate $\E_t = \E [\cdot\mid \cH_t]$.

First, we bound $g^{(1)}_t(v)$ in \eqref{e:f(v)+g(v)} using the fact that  $p_t(\cdot,v)$, $\eta_t(v)$ and $b_s(\cdot,v)$ (for all $s \in \{t-\dnet,\dots,t\}$) are determined by the randomness in steps $1,\ldots,t - \dnet - 1$. 
We use the tower rule and take the conditional expectation inside since all quantities apart 
from $B_{t-\dnet}(i,v),\ldots,B_t(i,v)$ are determined given $\mathcal H_{t-n}$, we rewrite the expression as
\[
    g^{(1)}_t(v)
=
     \E \Bigg[ \sum_{i\in K} \eta_t(v) p_t(i,v)  \sum_{s=t-\dnet}^{t-1} \frac{\ell_t(i)}{b_t(i,v)} \frac{\ell_s(i)}{b_s(i,v)} 
    \E_{t-\dnet} \bsb{ B_t(i,v) B_s(i,v) } \Bigg] \;.
\]
Conditional on $\cH_{t-\dnet}$ the Bernoulli random variables $B_{s}(i,v),$ and $B_t(i,v)$ for $s=t-\dnet,\ldots,t-1$ are independent. This follows because the feedbacks at time $s$ are missing at time $t$ for $s=t-n,\ldots,t-1$, and therefore, from the independent activation of agents and the fact that the only other source of randomness is the independent internal randomization of the algorithm, they are independent random variables, implying
\[
    \E_{t-\dnet} \bsb{ B_t(i,v) B_s(i,v) } = b_t(i,v) b_s(i,v) \;,
\]
for $s = t-\dnet,\ldots,t-1$.
Using $\ell_t(i),\ell_s(i) \le 1$, we then get
\[ 
    g^{(1)}_t(v)
\le
    \E\left[ \sum_{i\in K} \eta_t(v) p_t(i,v) n \right]
=
    \E[ \eta_t(v) n]\;.
\]

With a similar argument, we also get
\begin{align*}
     g^{(2)}_t(v)
&= 
    \E \lsb{
    \sum_{i\in K}
    \frac{\eta_t(v) \ell_t(i)^2 p_t(i,v)}{b_t(i,v)^2} \mathbb E_{t-n}\Big[ B_t(i,v)\Big]
    }
\le
    \E \lsb{ \sum_{i\in\Vfeed} \eta_t(v)
    \frac{ p_t(i,v) }{b_t(i,v)}
    }\;.
\end{align*}

Finally, the  single agent regret for each agent $v\in \Vnet$ is bounded by
\begin{align*}\label{eq:rtv}
    R_T(v)
&\leq 
    \E \lsb{\frac{\ln \labs K}{\eta_T(v)}} 
    +
    (n+1)
    +
    \sum_{t=n+2}^T
    \E\lsb{
    \eta_t(v) \lrb{n  + \sum_{i\in\Vfeed} 
    \frac{ p_t(i,v) }{b_t(i,v)}
    }
    }\\
&=
     \E \lsb{\frac{\ln \labs K}{\eta_T(v)}} 
    +
    (n+1)+
    \sum_{t=n+2}^T
    \E\lsb{
    \eta_t(v) X_t(v)
    }\;,
\end{align*}
where, in the second line, we defined $X_t(v)=\I \bcb{ t >  \dnet+1 } \left(n  + \sum_{i\in\Vfeed} \frac{ p_t(i,v) }{b_t(i,v)}\right)$. 

We now take $\eta_t(v) = \sqrt{\log (K)\big/\sum_{s=1}^t X_s(v)}$, and we use a  standard inequality stating that for any $a_t>0$, $\sum_{t=1}^T a_t \Big/ \sqrt{\sum_{s=1}^t a_s} \leq 2 \sqrt{\sum_{t=1}^T a_t}$. Applying this inequality for $a_t$ equal to $X_t(v)$ we have
\begin{align*}
 R_T(v)
& \leq
    (n+1)+
    \mathbb{E}\left[\sqrt{\log (K) \sum_{s=1}^T X_s(v)}\right]+\mathbb{E}\left[\sum_{t=1}^T \frac{\sqrt{\log (K)} X_t(v)}{\sqrt{\sum_{s=1}^t X_s(v)}}\right]  \\
& \leq 
    (n+1)
    +3 \mathbb{E}\left[\sqrt{\log (K) \sum_{t=1}^T X_t(v)}\right]\;. 
\end{align*}
Multiplying by $q(v)$, summing over agents $v\in\Vnet$ we obtain
\begin{align*}
    R_T
&= 
    \sum_{v} q(v) R_{T}(v) 
=
    Q (n+1)+
    3 Q {\sum_{v} \frac{q(v)}{Q}} \sqrt{\sum_{t=1}^T 
    \log (K)
    \left(n+\sum_{i\in K}\frac{p_t(i, v)}{b_t(i, v)}\right)}\\
&\leq
    Q (n+1)+
    3 Q \sqrt{\frac{\log (K)}{Q} 
    \sum_{t=1}^T\left( n Q+\sum_{v}\sum_{i\in K} \frac{p_t(i, v) q(v)}{b_t(i, v)}\right)} \\
& \leq
    Q(n+1)+
    3 Q \sqrt{ \log (K) \left(n+1+\frac{\alpha\left(N^n \boxtimes F\right)}{Q(1-e^{-1})} \right) T}\;,
\end{align*}
where the first inequality follows from Jensen's inequality and the second from Lemma~\ref{l:graph}.
\end{proof}

\vspace{-0.5em}
Note that in \Cref{t:upper}, every agent tunes the learning rate $\eta_t(v)$ using available information at time $t$. This allows the network regret to adapt to the unknown parameters of the problems such as the time horizon $T$, the independence number $\alpha\big(\Gnet^{\dnet} \boxtimes \Gfeed\big)$, and the total activation mass $Q$ on $\Vnet$.
This approach improves over the doubling trick approach used in \cite{cesa2019delay} since we do not need to restart the algorithm.

\section{Lower bound}
\label{s:lower}

In this section, we prove that not only the upper bound in \Cref{t:upper} is optimal in a \emph{minimax} sense---i.e., that it is attained for some pairs of graphs $(\Gnet,\Gfeed)$---but it is also tight in an \emph{instance-dependent} sense, for \emph{all} pairs of graphs belonging to a large class.
\begin{definition}
Let $\mathscr{G}$ be the class of all pairs of graphs $(\Gnet,\Gfeed)$ such that $\alpha(\Gnet \boxtimes \Gfeed) = \alpha(\Gnet) \alpha(\Gfeed)$.
\end{definition}
\vspace{-0.5em}
Many sufficient conditions guaranteeing that $(\Gnet,\Gfeed) \in \mathscr{G}$ are known in the graph theory literature: see, e.g., \citet[Section 3]{hales1973numerical}, \citet[Theorem 6]{acin2017new}, and \citet[Theorem 2]{rosenfeld1967problem}.
To the best of our knowledge, a full characterization of $\mathscr{G}$ is still a challenging open problem in graph theory that goes beyond the scope of this paper.
It is easy to verify that if (either $\Gnet$ or) $\Gfeed$ is a clique or an edgeless graph, then $(\Gnet,\Gfeed) \in \mathscr{G}$. 
We remark that these instances cover in particular both the bandit and the full-info case that were previously only studied individually, and analyzed with \emph{ad hoc} techniques.
For some further discussion on $\mathscr{G}$, we refer the interest reader to \Cref{s:discussion-on-G}.

The proof of the lower bound in \Cref{t:lower} exploits a reduction to a setting we introduce in \Cref{l:lower}. 
In this lemma, we state that in a single-agent setting with a feedback graph, if each one of $T$ time steps is independently skipped with a known and constant probability $\changes{\mu}$, the learner's regret is $\Omega\big(\sqrt{\alpha(\Gfeed)\changes{\mu}T}\big)$. Skipped rounds do not count towards regret.
More precisely, at each round $t$, there is an independent Bernoulli random variable $A_{t}$ with mean $\changes{\mu}$. If $A_{t}=0$, the learner is not required to make any predictions, incurs no loss, and receives no feedback information.
The (single-agent) regret of a \changes{possibly randomized} algorithm \alg\ on a sequence $\big(\ell_t\big)_{t\in [T]}$ of losses is defined as
$
    \Rsingle_{T}(\changes{\mu},\alg,\ell)
=
    \max_{i\in[K]} \Rsingle_{T}(\changes{\mu},\alg,\ell,i)
$ where
\[
    \Rsingle_{T}(\changes{\mu},\alg,\ell,i) = \mathbb{E}\left[\sum_{t=1}^{T}\big(\ell_{t}(I_t)-\ell_{t}(i)\big)\I\{A_{t}=1\}\right]
\]
and $I_t$ is the \changes{random variable denoting the} action played by the learner at time $t$ that only depends on the rounds $s\in\{1,\dots,t\}$ where $A_{s}=1$. \changes{The expectation in $\Rsingle_{T}$ is computed over both $I_t$ and $A_t$ for $t \in [T]$.} Note that it is not true, in general, that $\Rsingle_{T}(\changes{\mu},\alg,\ell) = \changes{\mu}\max_{i\in[K]}\mathbb{E}\left[\sum_{t=1}^{T}\big(\ell_{t}(I_{t})-\ell_{t}(i)\big)\right]$.

\begin{lemma}\label{l:lower}
For any feedback graph $\Gfeed$,
\changes{for any (possibly randomized) online learning algorithm \alg,}
for any $\changes{\mu} > 0$, and for any $T\ge\max\bcb{0.0064 \cdot \alpha(\Gfeed)^3,\frac{1}{\changes{\mu}^{3}}}$,
if each round $t \in [T]$ is independently skipped with probability $\changes{\mu}$, then
\[
    \inf_{\alg}\sup_{\ell} \Rsingle_{T}(\changes{\mu},\alg,\ell) \overset{\Omega}{=} \sqrt{\alpha(\Gfeed)\changes{\mu}T}~.
\] 
\end{lemma}



The proof of this lemma can be found in \Cref{a:lower}. \changes{Now let \alg\ be a possibly randomized online algorithm. Let $R_{T}(q,\alg,\ell)$ be the network regret~\eqref{e:regret} incurred by \alg\ run with oblivious network interface, losses $\ell=(\ell_{t})_{t\in[T]}$, and activation probabilities $q = (q(v))_{v\in A}$. We can now prove our lower bound.}
\begin{theorem}
\label{t:lower}
For any choice of $n$, any pair of graphs $(\Gnet^n,\Gfeed) \in \mathscr{G}$, and all $Q\in (0,\alphanet]$, we have that for  $T\geq\max\left\{0.0064 \cdot \alpha(\Gfeed)^{3},\alpha(\Gnet)^{3}/Q^3\right\}$ the following holds
\[
    \inf_{\changes{\alg}} \sup_{\ell,q} R_T(q,\changes{\alg},\ell) \overset{\Omega}{=} \sqrt{Q \alpha\big(\Gnet^{\dnet} \boxtimes \Gfeed\big) T } \;,
\]
where
\changes{the infimum is over all randomized online algorithms} and the supremum is over all assignments of losses $\ell = (\ell_t)_{t\in [T]}$ and activation probabilities $\brb{ q(v) }_{v\in\Vnet}$ such that $Q=\sum_{v\in\Vnet}q(v)$.
\end{theorem}
\begin{proof}
Fix any $(\Gnet, \dnet, \Gfeed)$ and $Q\in (0,\alphanet]$ as in the statement of the theorem.
Then $\alpha\big(\Gnet^{\dnet} \boxtimes \Gfeed\big) = \alphanet \alphafeed$.
Let $\cI\s \Vnet$ be a set of $\alphanet$ agents such that $\delta_{\Gnet}(u,v) > \dnet$ for all $u,v\in \cI$.
Define the activation probabilities $q(v) = Q / \alphanet \le 1$ for all $v\in \cI$ and $q(v)=0$ for all $v\in \Vnet\m \cI$.
By construction,  
no communication occurs among agents in $\cI$. Furthermore, each agent \changes{$v$} in $\cI$ is activated independently with \changes{the same} probability $q(v) = \changes{\bbP\big(v \in \cA_t\big)} = Q/\alphanet$.

\changes{Therefore, 
we can use \Cref{l:lower} to show that there exists a sequence of losses that simultaneously lower bounds the regret of all agents. Indeed, for all $T\geq \max\bcb{0.0064 \cdot \alpha(\Gfeed)^3,{\alphanet^3}/{Q^3} }$  we have
\begin{align*}
     \inf_{\alg}\sup_{\ell}R_{T}(q,\alg,\ell)
&=
    \inf_{\alg}
    \sup_{\ell}
    \max_{i\in K}
    \sum_{v\in \cI} \E \lsb{ \sum_{t=1}^T 
    \Brb{
    \ell_t\brb{I_t(v)}
    -
    \ell_t(i) } \I\{v \in \cA_t\}
    }
\\&=
    \inf_{\alg}
    \sup_{\ell}
    \max_{i\in K}
    \sum_{v\in \cI} \Rsingle_{T}\big(Q/\alphanet,\alg,\ell,i\big)
\\&=
    \alphanet \inf_{\alg} \sup_{\ell}
    \Rsingle_{T}\big(Q/\alphanet,\alg,\ell\big) 
\\&\overset{\Omega}{=} 
    \alphanet \sqrt{ \alphafeed (Q/\alphanet) T } \tag{\Cref{l:lower} with $\mu=Q/\alphanet$}
\\&=
    \sqrt{ Q \brb{ \alphafeed \alphanet } T }
=
    \sqrt{Q \alpha\big(\Gnet^{\dnet} \boxtimes \Gfeed\big) T } \;,
\end{align*}
where $I_t(v)$ is the random variable denoting the arm pulled at time $t$ by the instance of \alg\ run by agent $v$ and \Cref{l:lower} is invoked on $|\cI| = \alphanet$ instances of \alg\ with feedback graph $G = \Gfeed$ and independent randomization.}
\end{proof}

\section{Experiments}
\label{app:expe}
To empirically appreciate the impact of cooperation, we run a number of experiments on synthetic data. Our code available at \cite{github-repo}.

For each choice of $\Gnet$ and $\Gfeed$, we compare \ouralg{} run on $\Gnet$ and $\Gfeed$ against a baseline which runs \ouralg{} on $\Gnet'$ and $\Gfeed$, where $\Gnet'$ is an edgeless communication graph. Hence, the baseline runs $\Vnet$ independent instances on the same feedback graph.

In our experiments, we fix the time horizon ($T=10,\!000$), the number of arms ($\Vfeed=20$), and the number of agents ($\Vnet=20$). We also set the delay $\deltanet$ to $1$.
The loss of each action is a Bernoulli random variable of parameter $\nicefrac{1}{2}$, except for the optimal action which has parameter $\nicefrac{1}{2}-\sqrt{\nicefrac{\Vfeed}{T}}$. The activation probabilities $q(v)$ are the same for all agents $v\in\Vnet$, and range in the set $\{0.05, 0.5, 1\}$. This implies that $Q \in \{1,10,20\}$. 
The feedback graph $\Gfeed$ and the communication graph $\Gnet$ are Erdős–Rényi random graphs of parameters $p_{\Gnet}$, $p_{\Gfeed}\in\{0.2,0.8\}$. For each choice of the parameters, the same realization of $\Gnet$ and $\Gfeed$ was kept fixed in all the experiments, see Figure~\ref{fig:4graphs}.

\begin{figure}[th]
\centering
     \begin{subfigure}[b]{0.24\textwidth}
         \includegraphics[width=\textwidth]{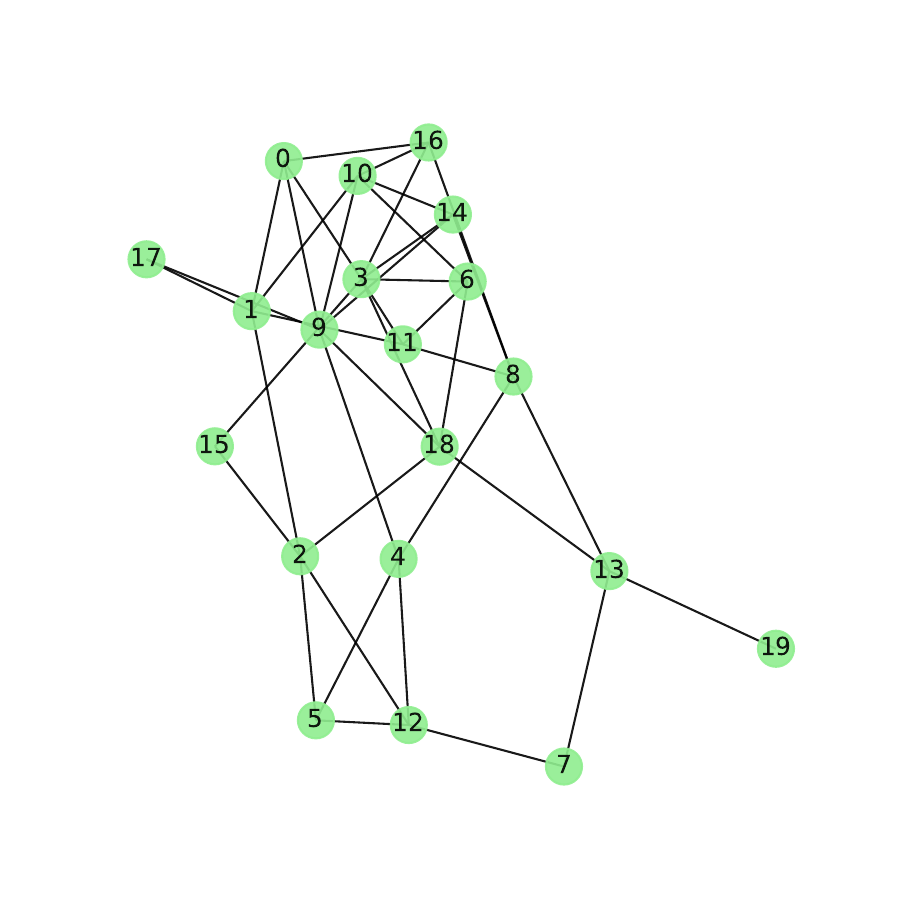}
         \caption{}
     \end{subfigure}
     \hfill
     \begin{subfigure}[b]{0.24\textwidth}
         \includegraphics[width=\textwidth]{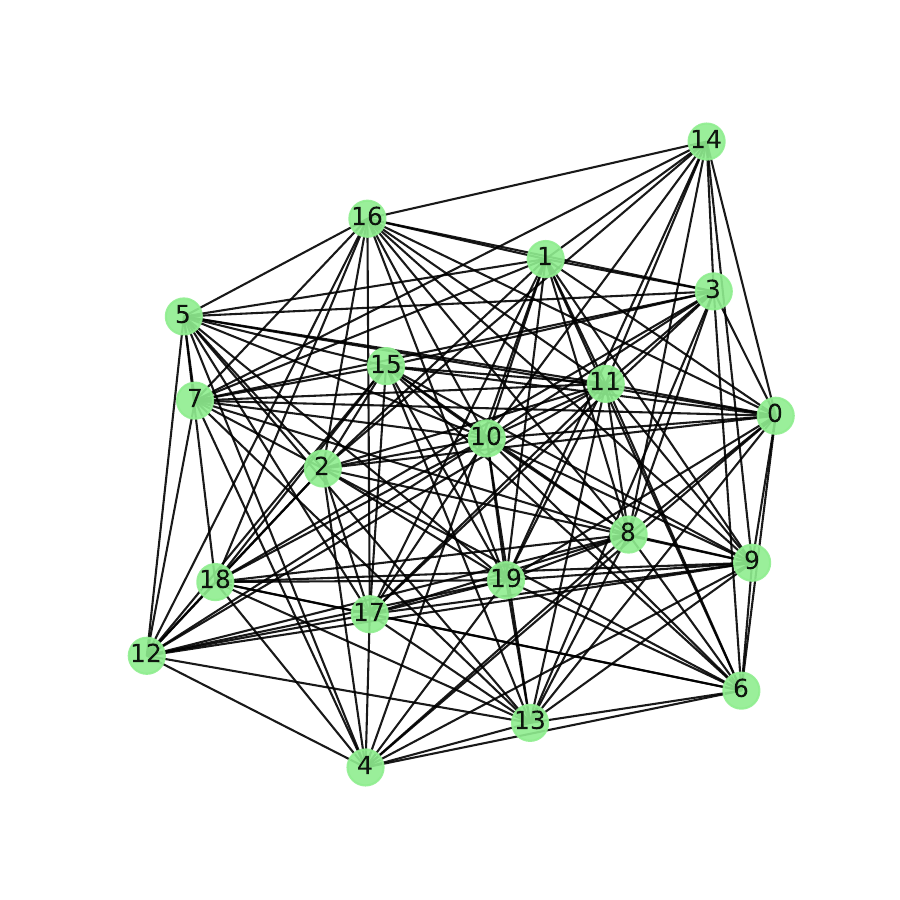}
         \caption{}
     \end{subfigure}
     \hfill
     \begin{subfigure}[b]{0.24\textwidth}
         \includegraphics[width=\textwidth]{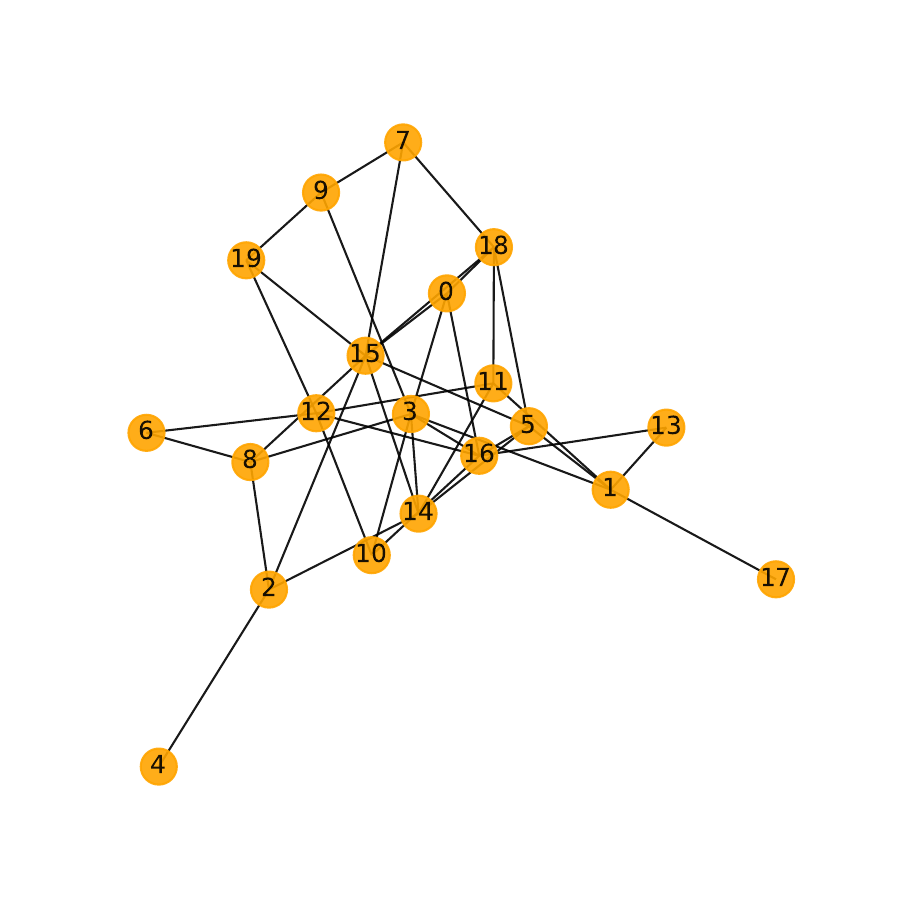}
         \caption{}
     \end{subfigure}
     \hfill
     \begin{subfigure}[b]{0.24\textwidth}
         \includegraphics[width=\textwidth]{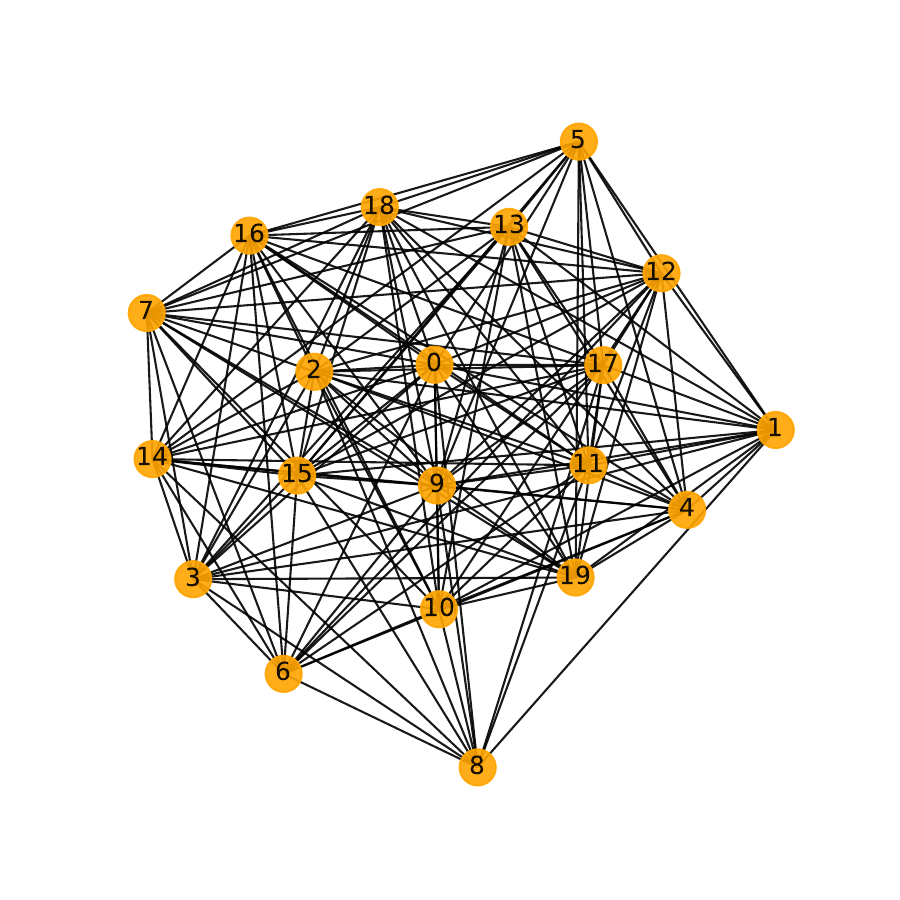}
         \caption{}
     \end{subfigure}
        \caption{The random instances of $\Gnet$ (leftmost graphs) and $\Gfeed$ (rightmost graphs) used in our experiments. The sparse graphs are Erdős–Rényi of parameter $0.2$, the dense graphs are Erdős–Rényi of parameter $0.8$.}
        \label{fig:4graphs}
\end{figure}


In each experiment, \ouralg{} and our baseline are run on the same realization of losses and agent activations. Hence, the only stochasticity left is the internal randomization of the algorithms. Our results are averages of $20$ repetitions of each experiment with respect to this randomization.


\begin{table}[ht]
\centering

\begin{minipage}{0.45\linewidth}
\centering
\subcaptionbox{Results for $q = 0.05$}[0.9\linewidth]{
\begin{tabular}{@{}ccccc@{}}
\toprule
\multicolumn{5}{c}{$q = 0.05$} \\ \midrule
$p_N$ & $p_F$ & \ouralg{} & Indep. &  $\nicefrac{\ouralg{}}{\text{Indep.}}$ \\ \midrule
0.8   & 0.8   & 31.7 & 110.1 & 29\% \\
0.8   & 0.2   & 94.4 & 130.5 & 72\% \\
0.2   & 0.8   & 59.5 & 110.1 & 54\% \\
0.2   & 0.2   & 103  & 130.5 & 79\% \\ \bottomrule
\end{tabular}
}
\end{minipage}
\hfill 
\begin{minipage}{0.45\linewidth}
\centering
\subcaptionbox{Results for $q = 0.5$}[0.9\linewidth]{
\begin{tabular}{@{}ccccc@{}}
\toprule
\multicolumn{5}{c}{$q = 0.5$} \\ \midrule
$p_N$ & $p_F$ & \ouralg{} & Indep. &  $\nicefrac{\ouralg{}}{\text{Indep.}}$ \\ \midrule
0.8   & 0.8   & 18.3 & 47.2 & 39\% \\
0.8   & 0.2   & 34.1 & 101.1 & 34\% \\
0.2   & 0.8   & 22.5 & 47.2 & 48\% \\
0.2   & 0.2   & 77.7 & 101.1 & 77\% \\ \bottomrule
\end{tabular}
}
\end{minipage}

\vspace{12pt} 

\begin{minipage}{0.45\linewidth}
\centering
\subcaptionbox{Results for $q = 1$}[0.9\linewidth]{
\begin{tabular}{@{}ccccc@{}}
\toprule
\multicolumn{5}{c}{$q = 1$} \\ \midrule
$p_N$ & $p_F$ & \ouralg{} & Indep. &  $\nicefrac{\ouralg{}}{\text{Indep.}}$ \\ \midrule
0.8   & 0.8   & 18.7 & 22.6 & 83\% \\
0.8   & 0.2   & 23.2 & 84.7 & 27\% \\
0.2   & 0.8   & 18.8 & 22.6 & 83\% \\
0.2   & 0.2   & 41.3 & 84.7 & 49\% \\ \bottomrule
\end{tabular}
}
\end{minipage}

\caption{\changes{Table of the performance of \ouralg{} and independent single agent optimal algorithms with feedback graphs. We summarise the performance in terms of total cumulative regret $R_T/Q$ after 1000 rounds for the two algorithms. The last column of each table is the percentage of the cumulative regret of \ouralg{} with respect to the independent optimal algorithms.}}
\label{table_summary}
\end{table}

\begin{figure}[th]
\begin{center}
\includegraphics[scale=0.6]{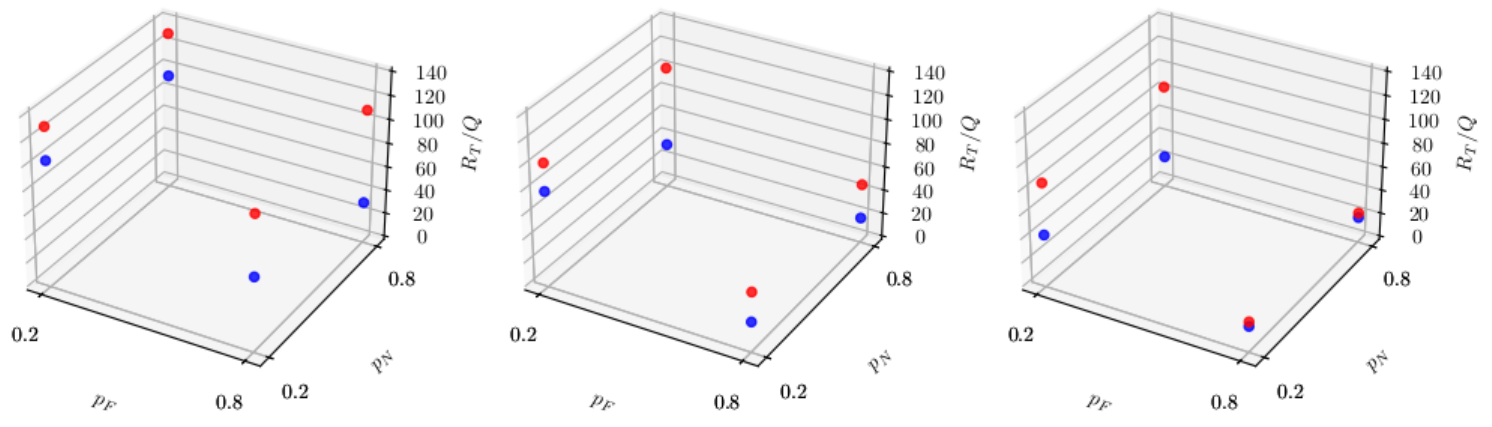}
\end{center}
\caption{
\label{fig:3d}
Average regret of \ouralg{} (blue dots) against the baseline (red dots). The $X$-axis and the $Y$-axis correspond to the parameters $p_{\Gfeed}$ and $p_{\Gnet}$ of the Erdős–Rényi graph, the $Z$-axis is the average regret $R_T/Q$. The three plots correspond to increasing values (from left to right) of activation probability: $q=0.05$ (leftmost plot), $q=0.5$ (central plot), $q=1$ (rightmost plot).}
\end{figure}

Figure~\ref{fig:3d} \changes{and Table~\ref{table_summary}} summarize the results of our experiments in terms of the average regret $R_T/Q$. 
See Appendix~\ref{app:exp} for the actual learning curves. 
Recall that our upper bound~\eqref{eq:upper} scales with the quantity $\sqrt{\alpha(\Gnet\boxtimes\Gfeed)/Q}$.
\begin{itemize}
    \item Note that our algorithm (blue dots) is never worse than the baseline (red dots). This is consistent with the fact that $\Gnet$ for the baseline is the edgeless graph, implying that $\alpha(\Gnet\boxtimes\Gfeed) = \Vnet\,\alpha(\Gfeed)$.
    \item Consistently with~\eqref{eq:upper}, the average performance gets worse when $Q \to 1$.\footnote{Also the baseline, whose agents learn in isolation, gets worse when $Q$ decreases. Indeed, when $Q=1$ agents get only to play for $T/|\Vnet|$ time steps each, and together achieve a network regret $R_T$ that scales with $\sqrt{|\Vnet|\alpha(\Gfeed)}$, as predicted by our analysis.}
    \item By construction, the performance of the baseline in each plot remains constant when $p_{\Gnet}$ varies in $\{0.2,0.8\}$. On the other hand, our algorithm is worse when $\Gnet$ is sparse because $\alpha(\Gnet\boxtimes\Gfeed)$ increases.
    \item The performance of both algorithms is worse when $\Gfeed$ is sparse because, once more, $\alpha(\Gnet\boxtimes\Gfeed)$ increases.
\end{itemize}


\section{Conclusions}
\label{s:conclusions}
In this work, we nearly characterize the minimax regret in cooperative online learning with feedback graphs, showing that the dependence on $\alpha\big(\Gnet^{\dnet} \boxtimes \Gfeed\big)$ in our bounds is tight in all but a few, pathological instances.
%
In a bandit setting, when all agents are active at all time steps, previous works showed that communication speeds up learning by reducing the variance of loss estimates. On the opposite end of the spectrum, in full-information settings, updating non-active agents was shown to improve regret.
These results left open the question of which updates would help in intermediate settings and why.
In this paper, we prove that both types of updates help local learners across the entire experts-bandits spectrum (\Cref{t:upper}).
We stress that this strategy crucially depends on the stochasticity of the activations. Indeed, \cite{cesa2020cooperative} disproved the naive intuition that more information automatically translates into better bounds, showing how using all the available data can lead to linear regret in the case of adversarial activations with \oni{}.

{As we only considered undirected feedback graphs, their extension to the directed case remains open. Using the terminology introduced by \citet{alon2015online} for directed graphs, we conjecture our main result (\Cref{t:upper}) remains true in the strongly observable case. In the weakly observable case, the scaling parameter of the single-agent regret is a graph-theoretic quantity different from the independence number, and the minimax rate becomes $T^{2/3}$. In this case, we ignore the best possible scaling parameter for the network regret.}

\changes{Finally, we leave open the problem of characterizing the minimax regret with stochastic activations but no oblivious network interface. We conjecture that the lower bound in \Cref{t:lower} could be extended to the case in which the algorithms run by each agent are not necessarily instances of the same online algorithm. In other words, we conjecture that the oblivious network interface is sufficient to achieve minimax optimality.}

\section*{Acknowledgments}
\changes{NCB is partially supported by the MUR PRIN grant 2022EKNE5K (Learning in Markets and Society), funded by the NextGenerationEU program within the PNRR scheme, the FAIR (Future Artificial Intelligence Research) project, funded by the NextGenerationEU program within the PNRR-PE-AI scheme, the EU Horizon CL4-2022-HUMAN-02 research and innovation action under grant agreement 101120237, project ELIAS (European Lighthouse of AI for Sustainability).
TC gratefully acknowledges the support of the University of Ottawa through grant GR002837 (Start-Up Funds) and that of the Natural Sciences and Engineering Research Council of Canada (NSERC) through grants RGPIN-2023-03688 (Discovery Grants Program) and DGECR2023-00208 (Discovery Grants Program, DGECR - Discovery Launch Supplement).
RDV is thankful for the funding received by the CHIST-ERA Project Causal eXplainations in Reinforcement Learning -- CausalXRL. }

\bibliography{TMLR/alphaAlpha}

\newpage

\appendix

\section{Proof of Lemma~\ref{l:lower}}\label{a:lower}


\begin{replemma}{l:lower}
For any feedback graph $\Gfeed$,
\changes{for any (possibly randomized) online learning algorithm \alg,}
for any $\changes{\mu} > 0$, and for any $T\ge\max\bcb{0.0064 \cdot \alpha(\Gfeed)^3,\frac{1}{\changes{\mu}^{3}}}$,
if each round $t \in [T]$ is independently skipped with probability $\changes{\mu}$, then
\[
    \inf_{\alg}\sup_{\ell} \Rsingle_{T}(\changes{\mu},\alg,\ell) \overset{\Omega}{=} \sqrt{\alpha(\Gfeed)\changes{\mu}T}~.
\]
\end{replemma}

\begin{proof} Let $\cT$ be the (random) set of times $\left\{ t\in[T]\mid A_{t}=1\right\} $
and let $\tau_{1}<\tau_{2}<\dots<\tau_{\left|\cT\right|}$ the (random) elements of $\cT$ in increasing order.
Fix a \changes{(possibly randomized)} online learning algorithm \alg\ and a sequence $\ell = \big(\ell_t\big)_{t\in [T]}$ of losses.
For any random variable $J$ (later, $J$ and the corresponding
``hard'' instance will be those used in the lower bound for online learning
with feedback graphs: \cite[Theorem 5]{alon2017nonstochastic}).
\changes{Let $N_{t}=A_1+\cdots+A_{t-1}$ be the number of rounds actually played up to time $t$.}
Then
\begin{align*}
    R_{T}(\changes{\mu},\alg,\ell) 
& =
    \max_{i\in[K]}\mathbb{E}\left[\sum_{t=1}^{T}\big(\ell_{t}(I_{N_{t}+1})-\ell_{t}(i)\big)\mathbb{I}\{A_{t}=1\}\right]
=                           
    \max_{i\in[K]}\mathbb{E}\left[\sum_{s\in\left[\left|\cT\right|\right]}\big(\ell_{\tau_{s}}(I_{s})-\ell_{\tau_{s}}(i)\big)\right]\\
 & \ge
    \E\left[\sum_{s\in\left[\left|\cT\right|\right]}\big(\ell_{\tau_{s}}(I_{s})-\ell_{\tau_{s}}(J)\big)\right]\\
 & =
    \sum_{n\in[T]}\sum_{\substack{\cT_{0}\subset[T]\\
\left|\cT_{0}\right|=n
}
}\mathbb{E}\left[\sum_{s\in\left[\left|\cT\right|\right]}\big(\ell_{\tau_{s}}(I_{s})-\ell_{\tau_{s}}(J)\big)\mid\cT=\cT_{0},\left|\cT_{0}\right|=n\right]\bbP\left(\cT=\cT_{0},\left|\cT_{0}\right|=n\right) \;.
\end{align*}
Then, we recognize that the conditional expectation in the previous formula is the expected regret for single-agent online learning  with feedback
graph. Therefore, from \cite[Theorem 5]{alon2017nonstochastic} we get that, letting $C_1 = (8/100)^2$ and , for all $T\ge C_1 \alpha(\Gfeed)^{3}$, 
\begin{align*}
    \inf_{\alg}\sup_{\ell} R_{T}(\changes{\mu},\alg,\ell)
&\ge
    \sum_{n\in[T]}\sum_{\substack{\cT_{0}\subset[T]
\\
    \left|\cT_{0}\right|=n}
    }\left(\e n\left(\frac{1}{2}-2\e\sqrt{\frac{n}{\alpha(\Gfeed)}}\right)\right)\bbP\left(\cT=\cT_{0},\left|\cT_{0}\right|=n\right)
\\&=
    \sum_{n\in[T]}\left(\e n\left(\frac{1}{2}-2\e\sqrt{\frac{n}{\alpha(\Gfeed)}}\right)\right)\bbP\left(\left|\cT\right|=n\right)
\\&=
    \sum_{n\in[T]}\e n\left(\frac{1}{2}-2\e\sqrt{\frac{n}{\alpha(\Gfeed)}}\right)f_{\Bin(\changes{\mu},T)}(n) \;,
\end{align*}
where $f_{\Bin(\changes{\mu},T)}$ is the p.m.f. of a Binomial random variable
with parameters $p,T$. We want $\e=\e^*\left(p,T\right)$ that maximizes
that expression. Therefore, by defining $g(\e) =a\e+b\e^{2}$ as the following quadratic polynomial in $\e$

\begin{align*}
g(\e) & =\sum_{n\in[T]}\left(\e n\left(\frac{1}{2}-2\e\sqrt{\frac{n}{\alpha(\Gfeed)}}\right)\right)f_{\Bin(\changes{\mu},T)}(n)\\
 & =\sum_{n\in[T]}\left(\frac{n}{2}f_{\Bin(\changes{\mu},T)}(n)\right)\e-2\sum_{m\in[T]}\left(\frac{m^{3/2}}{\alpha(\Gfeed)^{1/2}}f_{\Bin(\changes{\mu},T)}(m)\right)\e^{2}\;,
\end{align*}
where $a=\sum_{n\in[T]}\left(\frac{n}{2}f_{\Bin(\changes{\mu},T)}(n)\right)$ and $b=-2\sum_{m\in[T]}\left(\frac{m^{3/2}}{\alpha(\Gfeed)^{1/2}}f_{\Bin(\changes{\mu},T)}(m)\right)$.
We find that the maximum of $g$ is achieved in $-\frac{a}{2b}$, which gives the optimal value
\[\e^*(p,T)=\frac{\sum_{n\in[T]}\left(\frac{n}{2}f_{\Bin(\changes{\mu},T)}(n)\right)}{4\sum_{m\in[T]}\left(\frac{m^{3/2}}{\alpha(\Gfeed)^{1/2}}f_{\Bin(\changes{\mu},T)}(m)\right)}\]
and the function $g$ evaluated at the optimal value is equal to $g^*=-\frac{a^{2}}{4b}$, i.e.,
\begin{align*}
    g^*
=
    g(\e^*(p,T))
=
    g\left(\frac{\sum_{n\in[T]}\left(\frac{n}{2}f_{\Bin(\changes{\mu},T)}(n)\right)}{4\sum_{m\in[T]}\left(\frac{m^{3/2}}{\alpha(\Gfeed)^{1/2}}f_{\Bin(\changes{\mu},T)}(m)\right)}\right)
 =
    \frac{\sqrt{\alpha(\Gfeed)}}{8}
    \frac{\left(\sum_{n\in[T]}nf_{\Bin(\changes{\mu},T)}(n)\right)^{2}}{\sum_{m\in[T]}\left(m^{3/2}f_{\Bin(\changes{\mu},T)}(m)\right)} \;.
\end{align*}
Therefore, we can lower bound the regret with $g^*$ and obtain 
\begin{align*}
    \inf_{\alg}\sup_{\ell} R_{T}(\changes{\mu},\alg,\ell)  
& \geq 
     \frac{\sqrt{\alpha(\Gfeed)}}{8}
     \frac{\left(\sum_{n\in[T]}nf_{\Bin(\changes{\mu},T)}(n)\right)^{2}}{\sum_{m\in[T]}\left(m^{3/2}f_{\Bin(\changes{\mu},T)}(m)\right)} 
=
    \frac{\sqrt{\alpha(\Gfeed)\changes{\mu}T}}{8}
    \frac{(\changes{\mu}T)^{2}}{\sum_{m\in[T]}\left(m^{3/2}f_{\Bin(\changes{\mu},T)}(m)\right)}\\
 & = 
    \frac{\sqrt{\alpha(\Gfeed)\changes{\mu}T}}{8}
 \frac{(\changes{\mu}T)^{3/2}}{\sum_{m\in[T]}\left(m^{3/2}f_{\Bin(\changes{\mu},T)}(m)\right)} \;.
\end{align*}
where in the first equality we substituted the expected value of a binomial distribution of parameter $\changes{\mu}$.

We now want to prove the existence of a constant $c>0$ such that, for every $\changes{\mu}>0$ 
and every $T\geq\frac{1}{\changes{\mu}^{3}}$%
\[
\frac{(\changes{\mu}T)^{3/2}}{\sum_{m\in[T]}\left(m^{3/2}f_{\Bin(\changes{\mu},T)}(m)\right)}\geq c \text{ ,\quad or equivalently }~~~ \frac{\sum_{m\in[T]}\left(m^{3/2}f_{\Bin(\changes{\mu},T)}(m)\right)}{(\changes{\mu}T)^{3/2}}\leq c \;.
\]
We split the sum over $m\in[T]$ into two blocks, the first for $1\leq m \leq c_2\lfloor \changes{\mu}T\rfloor$ and the second for $c_2\lfloor \changes{\mu}T\rfloor< m \leq T$ for a constant $c_2 = \left\lceil \frac{\changes{\mu}T}{\lfloor \changes{\mu}T\rfloor} \left(\frac{1}{\changes{\mu}} \left(\sqrt{\frac{1}{2T}\ln\left(\frac{T^{3/2}}{c_1}\right)}-\frac{1}{T}\right)+1\right)\right\rceil$ and with $c_1 > 0$:
\begin{align}\label{e:split}
 \frac{\sum_{m\in[T]}\left(m^{3/2}f_{\Bin(\changes{\mu},T)}(m)\right)}{(\changes{\mu}T)^{3/2}}
 & =
    \frac{\sum_{m=1}^{c_2\lfloor \changes{\mu}T\rfloor}\left(m^{3/2}f_{\Bin(\changes{\mu},T)}(m)\right)}{(\changes{\mu}T)^{3/2}}+\frac{\sum_{m>c_2\lfloor \changes{\mu}T\rfloor}\left(m^{3/2}f_{\Bin(\changes{\mu},T)}(m)\right)}{(\changes{\mu}T)^{3/2}}\;.
\end{align}
The idea is to choose the split point $c_2\lfloor \changes{\mu}T\rfloor$ so that we can upper bound the tail mass using Hoeffding’s inequality. Hoeffding's inequality yields the simple bound
$
F_{\Bin(\changes{\mu},T)}(m) \leq \exp \left(-2 T\left(\changes{\mu}-\frac{m}{T}\right)^2\right)
$,  and together with symmetry properties of the binomial distribution $1-F_{\Bin(\changes{\mu},T)}(m)=F_{\Bin(1-\changes{\mu},T)}(T-m)$ we obtain a bound on the upper tail.
This contribution compensates exactly the term $\changes{\mu}^{3/2}$ left at the denominator for $T\geq 1/\changes{\mu}$, leaving just the constant $c_1$: 
\begin{align*}
\frac{\sum_{m>c_2\lfloor \changes{\mu}T\rfloor}\left(m^{3/2}f_{\Bin(\changes{\mu},T)}(m)\right)}{(\changes{\mu}T)^{3/2}}
 & \le
    \frac{e^{-2T\left(\left(1-\changes{\mu}\right)-\frac{T-\left(c_2\lfloor \changes{\mu}T\rfloor+1\right)}{T}\right)^{2}}T^{3/2}}{(\changes{\mu}T)^{3/2}}\\
 & =
    \frac{e^{-2T\left(\changes{\mu}\left(c_2\frac{\lfloor \changes{\mu}T\rfloor}{\changes{\mu}T}-1\right)+\frac{1}{T}\right)^{2}}}{\changes{\mu}^{3/2}}\\
 & =
    \frac{c_1}{(\changes{\mu}T)^{3/2}}\\
 & \le
    c_1\,.
\end{align*}
For the first term in \Cref{e:split}, we upper bound the lower tail simply by one:
\[
    \sum_{m=1}^{c_2\lfloor \changes{\mu}T\rfloor}\left(m^{3/2}f_{\Bin(\changes{\mu},T)}(m)\right)
\leq
    c_2\lfloor \changes{\mu}T\rfloor \cdot F_{\Bin(\changes{\mu},T)}(m)
\leq 
    c_2\lfloor \changes{\mu}T\rfloor \;.
\]
We conclude by proving that the first term in \Cref{e:split} is bounded by a constant. 
If we take $m\le c_2\lfloor \changes{\mu}T\rfloor$ we obtain for 
$T\ge2/\changes{\mu}$ and  $c_1\ge1$ 
\begin{align*}
    \frac{\sum_{m=1}^{c_2\lfloor \changes{\mu}T\rfloor}\left(m^{3/2}f_{\Bin(\changes{\mu},T)}(m)\right)}{(\changes{\mu}T)^{3/2}} 
& \le
    \frac{\left(c_2\lfloor \changes{\mu}T\rfloor\right)^{3/2}}{(\changes{\mu}T)^{3/2}}\\
 & \le
    \left(c_2\right)^{3/2}\\
 & \le
    \left(1+\frac{\changes{\mu}T}{\lfloor \changes{\mu}T\rfloor}\left(\frac{1}{\changes{\mu}}\left(\sqrt{\frac{1}{2T}\ln\left(\frac{T^{3/2}}{c_1}\right)}-\frac{1}{T}\right)+1\right)\right)^{3/2}\\
& \le
    \left(2+\frac{3\sqrt{3}}{8}\frac{1}{\changes{\mu}\sqrt{T}}\left(\sqrt{\log(T)}\right)\right)^{3/2}\\
& \le
    \left(2+\frac{3\sqrt{3}}{8}\right)^{3/2}
    \left(\frac{1}{\changes{\mu}\sqrt{T}}\sqrt{\ln\left(\frac{T^{3/2}}{c_1}\right)}\right)^{3/2}\\
 & \le
    4.32
    \left(\frac{1}{\changes{\mu}\sqrt{T}}\sqrt{\ln\left(\frac{T^{3/2}}{c_1}\right)}\right)^{3/2}\\
 & \le
    4.32
    \left(\ln T\right)^{3/4}\frac{1}{\left(\changes{\mu}^{2}T\right)^{3/4}}\\
 &\le
    4.32 
    \frac{\left(\ln T\right)^{3/4}}{T^{1/4}}\\
&\le 4.32 \cdot 1.08
\le 5 \;,
\end{align*}
where in the third-last inequality we used $\changes{\mu}\ge\frac{1}{T^{1/3}}$.
Putting everything together and letting $c_1=1$ yields
\begin{align*}
    \inf_{\alg}\sup_{\ell} R_{T}(\changes{\mu},\alg,\ell) 
\geq 
     \frac34 \sqrt{\alpha(\Gfeed)\changes{\mu}T} \;.
\end{align*}
\end{proof}

\section{Graph-theoretic results}
\label{s:appe-graph}

In this section, we present a general version of a graph-theoretic lemma (Lemma \ref{l:strong-product}) that is crucial for our positive results in \Cref{s:upper}. 
Before stating it, we recall a few known results.

The first result is a direct consequence of \cite[Lemma~10]{alon2017nonstochastic} specialized to undirected graphs.
\begin{lemma}
\label{l:alpha-bound}
Let $\cG=(\cV,\cE)$ be an undirected graph containing all self-loops and $\alpha_d(\cG)$ its $d$-th independence number. 
For all $i\in \cV$, let $\cN_d^\cG(i)$ be the $d$-th neighborhood of $i$, $p(i) \ge 0$, and $P(i) = \sum_{j\in \cN_d^\cG(i)} p(j) > 0$. 
Then
\[
	\sum_{i\in \cV} \frac{p(i)}{P(i)}
\le
	\alpha_d(\cG) \;.
\]
\end{lemma}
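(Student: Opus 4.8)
The plan is to give a self-contained greedy selection plus charging argument phrased directly in terms of the $d$-th neighborhoods, which recovers the specialization of \citet[Lemma~10]{alon2017nonstochastic} to undirected graphs. Since each ratio $p(i)/P(i)$ is invariant under rescaling all the $p(j)$ by a common factor, I may normalize $\sum_{j\in\cV}p(j)=1$ if convenient, although the argument below does not actually require it. The only structural facts I will use are that the relation ``$\delta_\cG(\cdot,\cdot)\le d$'' is symmetric (as $\cG$ is undirected) and that $i\in\cN_d^\cG(i)$ for every $i$ by the self-loop assumption, so that $P(i)=\sum_{j\in\cN_d^\cG(i)}p(j)$ is a sum over a closed, symmetric neighborhood.

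The core is a greedy construction of an independent set together with a charging scheme. I would maintain a set $R$ of remaining vertices, initialized to $\cV$, and repeat the following until $R=\emptyset$: pick $v\in R$ minimizing $P(v)$ (always computed over the whole vertex set $\cV$), add $v$ to a set $I$, and delete from $R$ every vertex in $\cN_d^\cG(v)$. Because each step deletes the entire $d$-neighborhood of the chosen vertex and $\cG$ is undirected, any two vertices selected at distinct steps are at $\cG$-distance strictly larger than $d$; hence $I$ is an independent set of $\cG^d$ and $\labs{I}\le\alpha_d(\cG)$. Moreover every vertex of $\cV$ is deleted at exactly one step, so the sets $S_v:=\cN_d^\cG(v)\cap R$ removed across the steps partition $\cV$.

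The key estimate is that each step contributes at most $1$ to the target sum. When $v$ is selected, every $i\in S_v$ is still in $R$ and therefore satisfies $P(i)\ge P(v)$ by minimality; consequently
\[
\sum_{i\in S_v}\frac{p(i)}{P(i)}
\le
\frac{1}{P(v)}\sum_{i\in S_v}p(i)
\le
\frac{1}{P(v)}\sum_{j\in\cN_d^\cG(v)}p(j)
=
\frac{P(v)}{P(v)}
=1,
\]
where the second inequality uses $p\ge0$ and $S_v\subseteq\cN_d^\cG(v)$. Summing over all selection steps and using that the $S_v$ partition $\cV$ yields $\sum_{i\in\cV}p(i)/P(i)\le\labs{I}\le\alpha_d(\cG)$, as claimed.

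I expect no genuine obstacle here; the only points needing care are bookkeeping. I must check that $I$ is truly independent in $\cG^d$ (this is where deleting the \emph{full} closed neighborhood and the symmetry of $\cG$ enter), that each vertex is charged exactly once so the per-step bounds aggregate to the whole sum, and that $P(v)>0$ at every step so the divisions are legitimate — the latter being guaranteed by the hypothesis $P(i)>0$. The one substantive idea is to always select the minimizer of $P$, which is exactly what prevents the removed probability mass $\sum_{i\in S_v}p(i)$ from exceeding $P(v)$.
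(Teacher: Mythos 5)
Your proof is correct and follows essentially the same route as the paper's: greedily select the vertex minimizing $P$ among those remaining, delete its closed $d$-th neighborhood, observe that the selected vertices form an independent set of $\cG^d$, and charge each deleted vertex so that every step contributes at most $1$ to the sum. The bookkeeping points you flag (independence via symmetry of $\delta_\cG$, the partition into the sets $S_v$, and positivity of $P$) are handled identically in the paper's argument.
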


\begin{proof}
Initialize $V_1=\cV$, fix $j_1\in\argmin_{j\in V_1}P(j)$, and denote $V_{2}=\cV\m \cN(j_{1})$. For $k\geq 2$ fix $j_{k}\in\argmin_{j\in V_{k}}P(j)$ and shrink $V_{k+1}=V_{k}\m \cN(j_{k})$ until $V_{k+1}=\varnothing$. Since $\cG$ is undirected $j_{k}\notin\bigcup_{s=1}^{k-1}\cN(j_{s})$, therefore the number $m$ of times that an action can be picked this way is upper bounded by $\alpha$. Denoting $\cN'(j_k)=V_{k}\cap \cN(j_k)$ this implies
\begin{align*}
	\sum_{i\in \cV}\frac{p(i)}{P(i)}
& =
	\sum_{k=1}^{m}\sum_{i\in \cN'(j_k)}\frac{p(i)}{P(i)}
\le
	\sum_{k=1}^{m}\sum_{i\in \cN'(j_k)}\frac{p(i)}{P(j_{k})}
\le
	\sum_{k=1}^{m}\frac{\sum_{i\in \cN(j_k)}p(i)}{P(j_{k})}
=
	m
\le
	\alpha \;.
\end{align*}
concluding the proof.
\end{proof}

The following result, known as the inequality of arithmetic and geometric means, or simply AM-GM inequality, is used in the proofs of Lemmas \ref{l:graph}, \ref{l:alpha+q-bound}, and \ref{l:strong-product}.

\begin{lemma}[AM-GM inequality]
\label{l:amgm}
For any $x_1,\dots, x_r \in [0,\iop)$,
\[
    \frac{x_1+\dots+x_r}{r}
\ge
    (x_1 \cdot \dots \cdot x_r)^{1/r} \;.
\]
\end{lemma}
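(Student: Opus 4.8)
The plan is to reduce the inequality to the elementary bound $1+s \le e^{s}$ (valid for all $s \in \R$), which is already invoked repeatedly elsewhere in the paper; this keeps the argument within the toolkit used throughout. First I would dispose of the degenerate case: writing $A = \frac{1}{r}(x_1 + \dots + x_r)$ for the arithmetic mean, if $A = 0$ then every $x_i = 0$, so both sides of the claimed inequality vanish and there is nothing to prove. Hence I may assume $A > 0$.

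For the main case, the key step is a normalization: for each $i$ set $t_i = x_i / A \ge 0$, so that $\sum_{i=1}^r t_i = r$. Applying $1+s \le e^{s}$ with $s = t_i - 1$ gives $t_i \le e^{t_i - 1}$ for every $i$. Taking the product over $i = 1, \dots, r$ and using $\sum_i t_i = r$,
\[
    \prod_{i=1}^r \frac{x_i}{A}
=
    \prod_{i=1}^r t_i
\le
    \prod_{i=1}^r e^{t_i - 1}
=
    \exp\lrb{ \sum_{i=1}^r t_i - r }
=
    e^{0}
=
    1 .
\]
Rearranging yields $\prod_{i=1}^r x_i \le A^r$, and taking nonnegative $r$-th roots gives $(x_1 \cdots x_r)^{1/r} \le A$, which is precisely the statement.

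I do not expect a genuine obstacle here. The only point requiring a little care is the case $A = 0$ (all $x_i$ equal to $0$), handled at the outset; partial zeros, where some but not all $x_i$ vanish, need no separate treatment, since then $A > 0$ and the computation above goes through verbatim, with the left-hand product simply equal to $0$. One should also observe that multiplying the per-coordinate inequalities $t_i \le e^{t_i - 1}$ is legitimate because both sides are nonnegative. Should a proof avoiding the exponential be preferred, two standard fallbacks are available: applying Jensen's inequality to the concave function $\log$ on $(0,\infty)$, or Cauchy's forward--backward induction (base case $r=2$ from $(\sqrt{x_1}-\sqrt{x_2})^2 \ge 0$, a doubling step from $r$ to $2r$, and a descent step from $r$ to $r-1$). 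The exponential argument, however, is the shortest and the most consistent with the rest of the paper.
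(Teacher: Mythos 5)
Your proof is correct, but it takes a different route from the paper. The paper's proof is a one-line application of Jensen's inequality to the concave function $\ln$ on $(0,\infty)$: it writes $\ln\bigl(\frac{1}{r}\sum_i x_i\bigr) \ge \frac{1}{r}\sum_i \ln(x_i) = \ln\bigl(\prod_i x_i^{1/r}\bigr)$ and exponentiates. You instead normalize by the arithmetic mean $A$ and multiply the per-coordinate bounds $t_i \le e^{t_i-1}$, which is the classical P\'olya-style argument; this is one of the two fallbacks you mention, and the one the paper actually uses is the Jensen one. What your approach buys is twofold: it relies only on the elementary inequality $1+s\le e^{s}$ already used elsewhere in the paper (so no appeal to concavity of the logarithm or to Jensen as a black box), and it handles zeros cleanly. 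The paper's proof is silent on the case where some $x_i=0$, in which $\ln(x_i)=-\infty$ and the displayed chain needs either a limiting convention or a separate (trivial) remark; your explicit treatment of $A=0$ versus partial zeros closes that small gap. The paper's proof is shorter; yours is more self-contained and slightly more careful at the boundary.
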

\begin{proof}
By Jensen's inequality,
\[
    \ln\lrb{ \frac{1}{r} \sum_{i=1}^r x_i }
\ge
    \sum_{i=1}^r \frac{1}{r} \ln \lrb{ x_i }
=
    \sum_{i=1}^r \ln \lrb{ x_i^{1/r} }
=
    \ln \lrb{ \prod_{i=1}^r x_i^{1/r} }
    \;.
\]
\end{proof}

The second result is proven in \cite[Lemma~3]{cesa2019delay}, but here we give a slightly different proof based on the AM-GM inequality.
\begin{lemma}
\label{l:alpha+q-bound}
Let $\cG=(\cV,\cE)$ be an undirected graph containing all self-loops and $\alpha_d(\cG)$ its $d$-th independence number.
For all $v\in \cV$, let $\cN_d^\cG(v)$ be the $d$-th neighborhood of $v$, $c(v) \ge 0$, and $C(v) = 1 - \prod_{w \in \cN_d^\cG(v)} \brb{ 1-c(w) } > 0$. Then
\[
    \sum_{v \in \cV} \frac{c(v)}{C(v)}
\le
    \frac{1}{1-e^{-1}} \lrb{ \alpha_d(\cG) + \sum_{v\in \cV} c(v) } \;.
\]
\end{lemma}
\begin{proof}
Set for brevity $P(v) = \sum_{w \in \cN_d^\cG(v)} c(w)$. Then we can write
\begin{align*}
    \sum_{v \in \cV} \frac{c(v)}{C(v)}
&=
    \underbrace{\sum_{v \in \cV \,:\, P(v) \ge 1} \frac{c(v)}{C(v)}}_{\mathrm{(I)}}
\quad + \quad
    \underbrace{\sum_{v \in \cV\,:\, P(v) < 1} \frac{c(v)}{C(v)}}_{\mathrm{(II)}} \;,
\end{align*}
and proceed by upper bounding the two terms~(I) and~(II) separately. Let $r(v)$ be the cardinality of $\cN_d^\cG(v)$. We have, for any given $v \in \cV$,
\begin{align*}
&
    \min\left\{  C(v) \,:\, \sum_{w \in \cN(v)} c(w) \ge 1 \right\}
=
    \min\left\{  C(v) \,:\, \sum_{w \in \cN(v)} c(w) = 1 \right\}
\\
&
\qquad
=
    1 - \max \left\{  \prod_{w \in \cN_d^\cG(v)} \brb{ 1-c(w) } \,:\, 
    \sum_{w \in \cN(v)} \brb{ 1-c(w) } = r(v) - 1 \right\}
\\
&
\qquad
\ge
    1-\left(1-\frac{1}{r(v)}\right)^{r(v)}
\ge
    1-e^{-1} \;.
\end{align*}
where the first equality follows from the definition of $C(v)$ and the monotonicity of $x\mapsto 1-x$, 
the first inequality is implied by the AM-GM inequality (Lemma~\ref{l:amgm}), and the last one comes from $r(v) \ge 1$ (for $v \in \cN_d^\cG(v)$). 
Hence
\begin{align*}
    \mathrm{(I)}
\le
    \sum_{v \in \cV \,:\, P(v) \ge 1} \frac{c(v)}{1-e^{-1}}
\le
    \sum_{v \in \cV} \frac{c(v)}{1-e^{-1}} \;.
\end{align*}
As for~(II), using the inequality $1-x \leq e^{-x}$, $x\in \R$, with $x = c(w)$, we can write
\begin{align*}
    C(v)
\ge
    1-\exp\left(-\sum_{w \in \cN_d^\cG(v)} c(w)\right)
=
    1-\exp\left(- P(v)\right) \;.
\end{align*}
Now, since in (II) we are only summing over $v$ such that $P(v) < 1$, we can use the inequality $1-e^{-x} \geq (1-e^{-1})\,x$, holding when $x \in [0,1]$, with $x = P(v)$, thereby concluding that
\[
 C(v) \ge (1-e^{-1})P(v) \;.
\]
Thus
\begin{align*}
    \mathrm{(II)}
\le
    \sum_{v \in \cV\,:\, P(v) < 1} \frac{c(v)}{(1-e^{-1})P(v)}
\le
    \frac{1}{1-e^{-1}}\,\sum_{v \in \cV} \frac{c(v)}{P(v)}
\le
    \frac{\alpha}{1-e^{-1}} \;,
\end{align*}
where in the last step we used Lemma~\ref{l:alpha-bound}. 
\end{proof}

We can now state a more general version of our key graph-theoretic result, which can be proved similarly to Lemma \ref{l:graph}.

\begin{lemma}
\label{l:strong-product}
Let $\cG_1=(\cV_1,\cE_1)$ and $\cG_2=(\cV_2,\cE_2)$ be two undirected graphs containing all self-loops and $\alpha\big(\cG_1\boxtimes\cG_2\big)$ the independence number of their strong product $\cG_1 \boxtimes \cG_2$.
For all $(i,j)\in \cV_1 \times \cV_2$, let also $\cN_1^{\cG_1}(i)$, $\cN_1^{\cG_2}(v)$, and $\cN_1^{\cG_1\boxtimes \cG_2}(i,v)$ be the first neighborhoods of $i$ (in $\cG_1$), $v$ (in $\cG_2$), and $(i,v)$ (in $\cG_1\boxtimes \cG_2$).
If $\bw = \brb{ w(j,u) }_{(j,u)\in\cV_1\times\cV_2}$ is an arbitrary matrix with non-negative entries such that $1 - \sum_{j\in\cN_1^{\cG_1}(i)}w(j,u) \ge 0$ for all $(i,u)\in\cV_1\times\cV_2$ and $1 - \prod_{u \in \cN_1^{\cG_2}(v)} \brb{ 1-\sum_{j\in \cN_1^{\cG_1}(i) } w(j,u) } > 0$ for all $(i,v)\in\cV_1\times\cV_2$, then
\[
    \sum_{i\in\cV_1} \sum_{v\in\cV_2} \frac{ w(i,v) }{ 1 - \prod_{u \in \cN_1^{\cG_2}(v)} \brb{ 1-\sum_{j\in \cN_1^{\cG_1}(i) } w(j,u) } }
\le
    \frac{ e }{e-1} \lrb{ \alpha\big(\cG_1\boxtimes\cG_2\big) + \sum_{i\in \cV_1} \sum_{v\in\cV_2} w(i,v) }
    \;.
\]
\end{lemma}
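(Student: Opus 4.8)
The plan is to mirror the proof of Lemma~\ref{l:graph} almost verbatim; the only structural novelty is that the weights $w(i,v)$ are now an arbitrary non-negative matrix (subject to the two stated feasibility constraints) rather than the product form $q(v)p(i,v)$. Abbreviate the denominator on the left-hand side as $f_{\bw}(i,v) = 1 - \prod_{u \in \cN_1^{\cG_2}(v)} \brb{ 1-\sum_{j\in \cN_1^{\cG_1}(i) } w(j,u) }$, and write $W(i,v) = \sum_{(j,u) \in \cN_1^{\cG_1}(i) \times \cN_1^{\cG_2}(v)} w(j,u)$ for the total mass carried by the product of the two neighborhoods. I would split the sum according to whether $W(i,v) < 1$ or $W(i,v) \ge 1$ and bound the two pieces separately.

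For the piece with $W(i,v) < 1$, I would first apply $1-x\le e^{-x}$ to each factor of the product to get $f_{\bw}(i,v) \ge 1 - e^{-W(i,v)}$, and then $1-e^{-x}\ge(1-e^{-1})x$ on $[0,1]$ (legitimate since $W(i,v)<1$) to obtain $f_{\bw}(i,v)\ge(1-e^{-1})W(i,v)$. This bounds the piece by $\frac{1}{1-e^{-1}}\sum_{i,v}\frac{w(i,v)}{W(i,v)}$. The crucial observation is that $\cN_1^{\cG_1}(i)\times\cN_1^{\cG_2}(v)$ is exactly the first neighborhood $\cN_1^{\cG_1\boxtimes\cG_2}(i,v)$ of $(i,v)$ in the strong product, so $W$ is precisely the neighborhood-mass function on the vertex set $\cV_1\times\cV_2$ of the single graph $\cG_1\boxtimes\cG_2$. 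Lemma~\ref{l:alpha-bound}, applied to $\cG_1\boxtimes\cG_2$ with the weights $w$ treated as a function on its vertices, then bounds this piece by $\frac{\alpha\brb{\cG_1\boxtimes\cG_2}}{1-e^{-1}}$.

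For the piece with $W(i,v)\ge1$, I would bound $1 - f_{\bw}(i,v)$ by the maximum of $\prod_{u\in\cN_1^{\cG_2}(v)}\brb{1-C(u)}$, where $C(u)=\sum_{j\in\cN_1^{\cG_1}(i)}w(j,u)$, taken over feasible vectors $\bC$ summing to $1$. Here the first feasibility constraint $1-\sum_{j\in\cN_1^{\cG_1}(i)}w(j,u)\ge0$ is exactly what forces each $C(u)\in[0,1]$, so that the AM-GM inequality (Lemma~\ref{l:amgm}) applies to the complements $1-C(u)$ and gives $\prod_u\brb{1-C(u)}\le\brb{1-\frac{1}{r(v)}}^{r(v)}\le e^{-1}$, with $r(v)=\labs{\cN_1^{\cG_2}(v)}\ge1$ (since $\cG_2$ carries self-loops). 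Hence $f_{\bw}(i,v)\ge1-e^{-1}$ on this piece, so it is at most $\frac{1}{1-e^{-1}}\sum_{i,v}w(i,v)$.

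Summing the two bounds and using $\frac{1}{1-e^{-1}}=\frac{e}{e-1}$ yields the claim. The only genuinely new points relative to Lemma~\ref{l:graph} are bookkeeping rather than obstacles: one must verify that the first neighborhood in $\cG_1\boxtimes\cG_2$ factorizes as $\cN_1^{\cG_1}(i)\times\cN_1^{\cG_2}(v)$ (which is immediate from the definition of the strong product and is what licenses invoking Lemma~\ref{l:alpha-bound} on the product graph), and one must check that the two stated feasibility hypotheses play exactly the role that the automatic bounds $\sum_j p(j,u)\le1$ and $q(u)\le1$ played for free in Lemma~\ref{l:graph}, namely keeping each factor $1-C(u)$ in $[0,1]$ and keeping every denominator $f_{\bw}(i,v)$ strictly positive. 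I do not expect any step to pose a real difficulty once this correspondence is tracked.
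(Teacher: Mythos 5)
Your proposal is correct and follows essentially the same route as the paper, which itself only remarks that Lemma~\ref{l:strong-product} ``can be proved similarly to Lemma~\ref{l:graph}'': you reproduce that adaptation faithfully, including the key observation that $\cN_1^{\cG_1\boxtimes\cG_2}(i,v)=\cN_1^{\cG_1}(i)\times\cN_1^{\cG_2}(v)$, which licenses applying Lemma~\ref{l:alpha-bound} to the product graph, and the correct identification of the two feasibility hypotheses as substitutes for the bounds that held automatically in the product-form case. No gaps.
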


\subsection{Further discussion on \texorpdfstring{$\mathscr{G}$}{G}}
\label{s:discussion-on-G}

In general, $\alpha(\Gnet)\alpha(\Gfeed) \le \alpha\big(\Gnet \boxtimes \Gfeed\big)$ holds for any arbitrary pairs of graphs $\Gnet,\Gfeed$. 
Indeed, the Cartesian product $I \times J$ of an independent set $I$ of $\Gnet$ and an independent set $J$ of $\Gfeed$ is an independent set of $\Gnet \boxtimes \Gfeed$.
There exist graphs $N,F$ with $\alpha(\Gnet)\alpha(\Gfeed) \ll \alpha\big(\Gnet \boxtimes \Gfeed\big)$, but these appear to be quite rare and pathological cases. For the sake of completeness, we add an example of such a construction below.
This shows that not all pairs of graphs belong to $\mathscr{G}$.

\begin{example}
\label{e:indep-prod}
Take as the first graph $G_1=(V_1,E_1)$, the cycle $C_5$ over $5$ vertices.
Then, for any $k\ge 2$, build $G_k=(V_k,E_k)$ inductively by replacing each vertex $v\in V_{k-1}$ by a copy of $C_5$ and each edge $e\in E_{k-1}$ by a copy of $K_{5,5}$ (the complete bipartite graph with partitions of size $5$ and $5$) between the two copies of $C_5$ that replaced its endpoints.
It can be shown that $\alpha(G_k)=2^k$ but $\alpha\big(G_k \boxtimes G_k\big) \ge 5^k \gg 4^k = \alpha(G_k)^2$.

To see why, note first that $\alpha(C_5) = 2$ but $\alpha\big(C_5 \boxtimes C_5\big) \ge 5$, by choosing the independent set containing the $5$ vertices $(1,1)$, $(2,3)$, $(3,5)$, $(4,2)$, $(5,4)$.
For $k\ge 2$, $\alpha(G_k) = 2^k$ but we can take the analogous in $G_k \boxtimes G_k$ of the above independent set in $C_5\boxtimes C_5$. 
This gives $5$ sets $S_1, S_2, S_3, S_4, S_5$ of $25^{k-1}$ vertices each, with no edges between $S_i, S_j$ when $i \ne j$. The subgraph of $G_k \boxtimes G_k$ induced by each $S_i$ is simply the previous iteration $G_{k-1}$ of this construction, and proceeding by induction we can find an independent subset of each $S_i$ with  $5^{k-1}$ vertices, giving a total of $5^k$ independent vertices.
\end{example}

\section{The upper bound of \texorpdfstring{\cite{cesa2020cooperative}}{Cesa-Bianchi et al. (2020)} for experts}
\label{s:claire}

\cite[Theorem~10]{cesa2020cooperative} gives theoretical guarantees for the \emph{average} regret over active agents.
In this section, we briefly discuss how to convert their statement to a corresponding result for the \emph{total} regret over active agents that is the focus of our present work.

Before stating the theorem, we recall that the \emph{convex conjugate} $f^*\colon \R^d \to \R$ of a convex function $f\colon \cx \to \R$ is defined, for any $\bx \in \R^d$, by
$
f^*(\bx) = \sup_{\bw \in \cx} \big( \bx \cdot \bw - f(\bw) \big)
$. 
Moreover, given $\sigma>0$, we say that $f$ is $\sigma$\textsl{-strongly convex} on $\cx$ with respect to a norm $\lno{\cdot}$ if, for all $\bu,\bw\in \cx$, we have
$
	f(\bu) \ge f(\bw) + \nabla f (\bw) \cdot (\bu - \bw) + \frac{\sigma}{2} \lno{ \bu - \bw }^2
$.
The following well-known result can be found in \cite[Lemma~2.19 and subsequent paragraph]{shalev2012online}.

\begin{lemma}
Let $f\colon \cx \to \R$ be a strongly convex function on $\cx$. Then the convex conjugate $f^*$ is everywhere differentiable on $\R^d$.
\end{lemma}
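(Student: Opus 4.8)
The plan is to exploit the standard duality between strong convexity of $f$ and differentiability of its conjugate, deducing differentiability from the fact that the subdifferential of $f^*$ is everywhere a singleton. First I would fix an arbitrary $\bx\in\R^d$ and study the map $g_{\bx}\colon\bw\mapsto \bx\cdot\bw - f(\bw)$, whose supremum over $\cx$ defines $f^*(\bx)$. Subtracting the $\sigma$-strong convexity inequality for $f$ shows that $g_{\bx}$ is $\sigma$-strongly concave on $\cx$: for all $\bu,\bw\in\cx$,
\[
    g_{\bx}(\bu) \le g_{\bx}(\bw) + \brb{ \bx - \nabla f(\bw) } \cdot (\bu - \bw) - \frac{\sigma}{2} \lno{ \bu - \bw }^2
\]
In particular $g_{\bx}$ admits at most one maximizer on $\cx$.

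Next I would establish that the supremum is attained. The quadratic term forces $g_{\bx}(\bw)\to-\infty$ as $\lno{\bw}\to\infty$, so $g_{\bx}$ is coercive; since $\cx$ is a closed convex set, restricting to a (closed and bounded) sublevel set and invoking compactness yields a maximizer $\bw^\star(\bx)$, necessarily unique by the previous paragraph. This also shows $f^*(\bx)$ is finite, so $f^*$ is genuinely real-valued on $\R^d$.

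The conceptually key step is to convert uniqueness of the maximizer into differentiability of $f^*$. As a pointwise supremum of affine functions of $\bx$, the conjugate $f^*$ is convex on $\R^d$, and for a convex function differentiability at a point is equivalent to its subdifferential there being a singleton. By the Fenchel--Young equality, $\bw\in\partial f^*(\bx)$ holds if and only if $\bx\cdot\bw - f(\bw) = f^*(\bx)$, i.e.\ if and only if $\bw$ maximizes $g_{\bx}$; since this maximizer is unique, $\partial f^*(\bx)=\{\bw^\star(\bx)\}$ is a singleton, whence $f^*$ is differentiable at $\bx$ with $\nabla f^*(\bx)=\bw^\star(\bx)=\argmax_{\bw\in\cx}\brb{\bx\cdot\bw - f(\bw)}$. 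As $\bx$ was arbitrary, $f^*$ is everywhere differentiable on $\R^d$.

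The main obstacle I anticipate is \emph{existence} of the maximizer rather than its uniqueness: uniqueness is immediate from strong concavity, but existence rests on the coercivity estimate together with closedness of $\cx$, and some care is warranted when $\cx$ is unbounded or when the definition's use of $\nabla f$ is read as presupposing differentiability only on the interior of $\cx$. One can of course bypass these subtleties by citing \citet[Lemma~2.19]{shalev2012online} directly, but the self-contained route above via the subdifferential characterization is cleaner and renders the gradient identity $\nabla f^*(\bx)=\argmax_{\bw\in\cx}\brb{\bx\cdot\bw - f(\bw)}$ transparent, which is exactly the form needed for the regret analysis.
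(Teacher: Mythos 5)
Your argument is correct. Note that the paper does not actually prove this lemma---it is stated as a known fact with a pointer to \citet[Lemma~2.19 and the subsequent paragraph]{shalev2012online}---and the proof you give is precisely the standard one behind that citation: strong convexity of $f$ makes $\bw\mapsto \bx\cdot\bw-f(\bw)$ strongly concave, hence with a unique maximizer; the Fenchel--Young characterization identifies $\partial f^*(\bx)$ with the set of maximizers; and a finite convex function on $\R^d$ is differentiable exactly where its subdifferential is a singleton. The one point that genuinely needs the care you already flag is attainment of the supremum: for the identification $\partial f^*(\bx)=\{\bw^\star(\bx)\}$ one needs $f$ to be closed (equivalently, one should argue through $f^{**}$ if $\cx$ is not closed or $f$ is not lower semicontinuous), but in the paper's setting the regularizers are continuous and strongly convex on a closed convex decision set, so this is harmless. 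Your closing identity $\nabla f^*(\bx)=\argmax_{\bw\in\cx}\brb{\bx\cdot\bw-f(\bw)}$ is indeed the form used in \Cref{a:ftrl}.
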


\begin{algorithm2e}
\DontPrintSemicolon
\SetAlgoNoLine
\SetAlgoNoEnd
\SetKwInput{kwIn}{input}
\SetKwInput{kwInit}{initialization}
\kwIn{$\sigma_t$-strongly convex regularizers $g_t\colon\cx \to \R$ for $t=1,2,\dots$\par}
\kwInit{$\bth_1 = \bzero \in \R^d$\par}
\For
{%
    $t=1,2,\dots$
}
{%
    choose $\bw_t = \nabla g_t^*(\bth_t)$\;
	observe $\nabla \ell_t(\bw_t) \in \R^d$\;
    update $\bth_{t+1} = \bth_t - \nabla \ell_t(\bw_t)$
}
\caption{\label{a:ftrl}}
\end{algorithm2e}


The following result---see, e.g., \cite[bound~(6) in Corollary~1 with $F$ set to zero]{orabona2015generalized}---shows an upper bound on the regret of Algorithm~\ref{a:ftrl} for single-agent online convex optimization with expert feedback.
\begin{theorem}
\label{t:omd}
Let $g\colon \cx\to\R$ be a differentiable function $\sigma$-strongly convex with respect to $\lno{\cdot}$. Then the regret of Algorithm~\ref{a:ftrl} run with  $g_t = \frac{\sqrt{t}}{\eta}g$, for $\eta > 0$, satisfies
\begin{align*}
	\sum_{t=1}^T \ell_t\big(\bx_t\big) &- \inf_{\bx \in \cx} \sum_{t=1}^T \ell_t (\bx)
\le
	\frac{D}{\eta}\sqrt{T}
	+\frac{\eta}{2\sigma}\sum_{t=1}^{T}\frac{1}{\sqrt{t}}\lno{\nabla\ell{}_{t}}_{*}^{2} \;,
\end{align*}
where $D = \sup g$ and $\lno{\cdot}_{*}$ is the dual norm of $\lno{\cdot}$. If $\sup \lno{\nabla\ell{}_{t}}_{*} \le L$, then choosing $\eta = \sqrt{2 \sigma D}/L$ gives
$
	R_{T} \le L\sqrt{2 D T / \sigma}
$.
\end{theorem}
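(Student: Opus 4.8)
The plan is to treat \Cref{a:ftrl} as an instance of Follow-the-Regularized-Leader with the time-varying regularizers $g_t=\frac{\sqrt t}{\eta}g$ and to run the standard two-term FTRL analysis, separating a \emph{penalty} term (from the size of the regularizer at the comparator) from a \emph{stability} term (from the curvature of the regularizer relative to the gradients). First I would linearize: since each $\ell_t$ is convex and the update depends on $\ell_t$ only through $\mathbf g_t=\nabla\ell_t(\bw_t)$, convexity gives $\ell_t(\bw_t)-\ell_t(\bx)\le\lan{\mathbf g_t,\bw_t-\bx}$ for every $\bx\in\cx$. Hence it suffices to bound the linear regret $\sum_t\lan{\mathbf g_t,\bw_t-\bu}$ against an arbitrary comparator $\bu\in\cx$ and then take the infimum over $\bu$.

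The technical heart is the strong-convexity/smoothness duality. Because $g$ is $\sigma$-strongly convex with respect to $\lno{\cdot}$, the scaled regularizer $g_t=\frac{\sqrt t}{\eta}g$ is $\sigma_t$-strongly convex with $\sigma_t=\frac{\sqrt t}{\eta}\sigma$, and by the preceding lemma its conjugate $g_t^*$ is differentiable, with $g_t^*$ being $\frac{1}{\sigma_t}$-smooth with respect to the dual norm $\lno{\cdot}_*$. This is exactly what makes $\bw_t=\nabla g_t^*(\bth_t)$ well defined and lets me control the one-step movement: feeding the update $\bth_{t+1}=\bth_t-\mathbf g_t$ into the smoothness inequality for $g_t^*$ yields the per-round stability contribution $\frac{1}{2\sigma_t}\lno{\mathbf g_t}_*^2=\frac{\eta}{2\sqrt t\,\sigma}\lno{\mathbf g_t}_*^2$.

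The penalty term is controlled by monotonicity of the regularizer sequence: since $t\mapsto g_t$ is increasing and $D=\sup g$, the comparator contributes at most $g_{T}(\bu)\le\frac{\sqrt T}{\eta}D$ (up to a harmless off-by-one in the index, which does not affect the leading order). Summing the penalty and the per-round stability contributions over $t=1,\dots,T$ gives the stated bound
\[
\sum_{t=1}^T\ell_t(\bw_t)-\inf_{\bx\in\cx}\sum_{t=1}^T\ell_t(\bx)
\le
\frac{D}{\eta}\sqrt T+\frac{\eta}{2\sigma}\sum_{t=1}^T\frac{1}{\sqrt t}\lno{\nabla\ell_t}_*^2 .
\]
For the corollary I would plug in $\lno{\nabla\ell_t}_*\le L$ together with the elementary estimate $\sum_{t=1}^T t^{-1/2}\le 2\sqrt T$, reducing the right-hand side to $\frac{D}{\eta}\sqrt T+\frac{\eta L^2}{\sigma}\sqrt T$, and then balance the two terms (AM--GM) by choosing $\eta$ of order $\sqrt{\sigma D}/L$, which gives regret $\cO\brb{L\sqrt{DT/\sigma}}$.

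The main obstacle I anticipate is getting the telescoping in the penalty/stability decomposition right with a \emph{time-varying} regularizer: the one-step bound above must be summed while the regularizer itself changes from $g_t$ to $g_{t+1}$, so one has to keep careful track of which conjugate appears in each term and where the strong-convexity slack is spent. The increasing factor $\sqrt t/\eta$ must be handled so that the penalty stays $O(\sqrt T)$ while the stability coefficient decays like $1/\sqrt t$ — this is precisely what makes the two contributions of the same order, and it is the step most prone to off-by-one and indexing errors.
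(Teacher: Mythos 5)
The paper does not actually prove \Cref{t:omd}: it is quoted directly from \citet[bound~(6) in Corollary~1]{orabona2015generalized}, so there is no in-paper argument to compare against. Your sketch is the standard (and correct) route to that cited bound: linearize via convexity, use strong-convexity/smoothness duality so that $g_t^*$ is $\frac{\eta}{\sigma\sqrt{t}}$-smooth with respect to $\lno{\cdot}_*$, and split the FTRL regret into the penalty $g_T(\bu)\le \frac{\sqrt{T}}{\eta}D$ plus per-round stability terms $\frac{\eta}{2\sigma\sqrt{t}}\lno{\nabla\ell_t}_*^2$; the monotonicity $g_{t+1}\ge g_t$ (hence $g_{t+1}^*\le g_t^*$) is exactly what makes the cross terms telescope with the right sign, which is the indexing subtlety you correctly flag. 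The one place your argument falls short of the literal statement is the final constant: with $\sum_{t=1}^T t^{-1/2}\le 2\sqrt{T}$ the bound becomes $\frac{D}{\eta}\sqrt{T}+\frac{\eta L^2}{\sigma}\sqrt{T}$, whose optimal balance is $\eta=\sqrt{\sigma D}/L$ with value $2L\sqrt{DT/\sigma}$, not the claimed $\eta=\sqrt{2\sigma D}/L$ with value $L\sqrt{2DT/\sigma}$; since the stated constant does not follow from the theorem's own first display either, this is a looseness in the statement rather than a gap in your proof, and it is immaterial for how the result is used downstream in \Cref{t:upper-coop-many}.
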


We can now present the equivalent of \cite[Theorem~10]{cesa2020cooperative} for cooperative online covenx optimization with expert feedback (i.e., $\Gfeed$ is a clique) where $\dnet=1$ but the feedback is broadcast to first neighbor immediately after an action is played (rather than the following round).
\begin{theorem}
\label{t:upper-coop-many}
Consider a network $\Gnet = (\Vnet,\Enet)$ of agents. 
If all agents $v$ run Algorithm~\ref{a:ftrl} with 
an oblivious network interface
and
$g_t = \frac{\sqrt{t}}{\eta}g$, 
where 
$\lno{g_t}_*$ is upper bounded by a constant $L>0$,
$\eta > 0$ is a learning rate, and
the regularizer $g\colon \cx\to\R$ is differentiable, $\sigma$-strongly convex with respect to some norm $\lno{\cdot}$, and upper bounded by a constant $M^2$, then the network regret satisfies
\[
	R_T
\le
	\left( \frac{M^2}{\eta}
	+\frac{\eta L^2}{2 \sigma} \right) \sqrt{2 Q(\alpha(N) + Q) T} \;.
\]
For $\eta = \sqrt{2\sigma}M/L$, we have
\[
	R_T \le \brb{ \sqrt{2 \sigma} LM }\sqrt{ Q(\alpha(N) + Q) T } \;.
\]
\end{theorem}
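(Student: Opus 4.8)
The plan is to reduce the cooperative problem to $\labs{\Vnet}$ essentially independent runs of single-agent \ftrl{} and then to glue the per-agent guarantees together with Lemma~\ref{l:alpha+q-bound}. First I would fix a global comparator $\bx^\star\in\argmin_{\bx}\sum_{t=1}^T\ell_t(\bx)$ and write $R_T=\E\bsb{\sum_{v\in\Vnet}\sum_{t\,:\,v\in\cA_t}\brb{\ell_t\brb{\bw_t(v)}-\ell_t(\bx^\star)}}$. In the expert case ($\Gfeed$ a clique, feedback reaching first neighbours with no delay), whenever some agent of $\cNnet(v)$ is active at round $t$, agent $v$ receives the whole loss $\ell_t$ and performs the \ftrl{} update $\bth_{t+1}(v)=\bth_t(v)-\nabla\ell_t\brb{\bw_t(v)}$ at its own iterate; otherwise it does nothing. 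Because the self-loops give $v\in\cNnet(v)$, the rounds in which $v$ is active are contained in its \emph{update rounds} $T_v=\bcb{t:\cA_t\cap\cNnet(v)\neq\varnothing}$, and restricted to $T_v$ the iterates of $v$ are exactly those of single-agent \ftrl{}. Thus Theorem~\ref{t:omd} applies with comparator $\bx^\star$ and horizon $N_v=\labs{T_v}$, giving $\sum_{t\in T_v}\brb{\ell_t\brb{\bw_t(v)}-\ell_t(\bx^\star)}\le\brb{M^2/\eta+\eta L^2/(2\sigma)}\,\kappa\sqrt{N_v}$ after bounding $\sum_{t\le N_v}t^{-1/2}\le 2\sqrt{N_v}$ and absorbing the resulting absolute constant $\kappa$; in the tuned case this reads $LM\sqrt{2N_v/\sigma}$.

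The step I expect to be the crux is converting this \emph{update-round} bound into the \emph{active-round} quantity that actually appears in $R_T$. Writing $A_t=\I\bcb{v\in\cA_t}$ and $U_t=\I\bcb{t\in T_v}$, the instantaneous regret $\rho_t(v)=\ell_t\brb{\bw_t(v)}-\ell_t(\bx^\star)$ is determined by the activations of rounds $1,\dots,t-1$ alone, hence is independent of the round-$t$ pair $(A_t,U_t)$. Using $A_t\le U_t$ together with $\E[A_t]=q(v)$ and $\E[U_t]=b(v)$, where $b(v)=1-\prod_{u\in\cNnet(v)}\brb{1-q(u)}$, independence gives $\E\bsb{A_t\,\rho_t(v)}=\tfrac{q(v)}{b(v)}\,\E\bsb{U_t\,\rho_t(v)}$. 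Summing over $t$ therefore turns the active-round expected regret of $v$ into $\tfrac{q(v)}{b(v)}$ times its update-round expected regret; combining with the \ftrl{} bound above and Jensen's inequality ($\E\sqrt{N_v}\le\sqrt{\E N_v}=\sqrt{b(v)T}$) yields a per-agent contribution of order $\tfrac{q(v)}{b(v)}\sqrt{b(v)T}=\sqrt T\,\tfrac{q(v)}{\sqrt{b(v)}}$. The subtlety here is that the argument needs the activations to be exogenous to each local update rule, which is exactly what the oblivious network interface guarantees, and it also needs the full-information feedback to collapse the simultaneous broadcasts inside $\cNnet(v)$ into one well-defined gradient step at $v$'s own iterate.

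Finally I would aggregate over agents. Cauchy--Schwarz gives $\sum_{v\in\Vnet}\tfrac{q(v)}{\sqrt{b(v)}}\le\sqrt{Q}\,\sqrt{\sum_{v\in\Vnet}\tfrac{q(v)}{b(v)}}$, and since $b(v)$ is precisely the quantity $C(v)$ of Lemma~\ref{l:alpha+q-bound} with $c(v)=q(v)$ and $d=1$, that lemma gives $\sum_{v\in\Vnet}\tfrac{q(v)}{b(v)}\le\tfrac{e}{e-1}\brb{\alphanet+Q}$, where $\alphanet=\alpha(\Gnet)$ since $\dnet=1$. Chaining these estimates with the $\sqrt T$ factor reproduces the announced $R_T\le\brb{M^2/\eta+\eta L^2/(2\sigma)}\sqrt{2Q\brb{\alpha(\Gnet)+Q}T}$ up to the absorbed numerical constants, and plugging in the single-agent optimal rate $\eta=\sqrt{2\sigma}\,M/L$ yields the second displayed inequality.
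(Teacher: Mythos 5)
Your proposal is correct and follows essentially the same route as the paper's own proof sketch: a per-agent bound of order $q(v)\sqrt{T/Q_v}$ (your $b(v)$ is exactly the paper's $Q_v$), aggregated over agents via Jensen/Cauchy--Schwarz into $\sqrt{Q\sum_v q(v)/Q_v}$, and finished with Lemma~\ref{l:alpha+q-bound}. The only difference is that you explicitly derive the per-agent step---restricting to update rounds $T_v$, applying Theorem~\ref{t:omd} there, and converting back via the independence of $\rho_t(v)$ from the round-$t$ activations---where the paper simply defers to the argument of \citet[Theorem~2]{cesa2020cooperative}.
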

\begin{proof}[Proof sketch.] 
For any $\bx\in \cx$, agent $v$, and time $t$, let $\bx_t(v)$ be the prediction made by $v$ at time $t$,
$
	r_t(v,\bx)
=
	\ell_t\big(\bx_t(v)\big)
	- \ell_t (\bx)
$, 
$
    Q_v
=
	\Pr\left( v \in \bigcup_{w\in \cA_t} \cN_1^N({w}) \right) 
= 
	1 - \prod_{w \in \cN_1^N(v)} \big( 1 - q(w) \big)
$, 
and
$\Vnet' := \bcb{ w \in \Vnet : q(w) > 0 }$.
Proceeding as in \cite[Theorem~2]{cesa2020cooperative} yields, for each $v\in \Vnet'$ and $\bx\in\cx$,
\begin{equation}
\label{e:fdm-set}
	\E \left[ \sum_{t=1}^{T} 
		r_t(v,\bx)	
	\right]
\le
	\left(\frac{M^2}{\eta} + \frac{\eta L^2}{2} \right)
	\sqrt{\frac{T}{Q_v}} \;.
\end{equation}
Now, by the independence of the activations of the agents at time $t$ and $\brb{ r_t(v,\bx) }_{v\in \Vnet', \bx \in \cx}$, we get
\begin{equation}
    \label{e:regr-part}
    R_T
=
    \sup_{\bx\in \cx} \sum_{v\in V'} q(v) \sum_{t=1}^T \E \bsb{ r_t(v,\bx) } \;.
\end{equation}
Putting \Cref{e:regr-part,e:fdm-set} together and applying Jensen's inequality yields
\[
	R_T
\le
	\left( \sum_{v\in V'}
		q(v)
	\sqrt{\frac{1}{Q_v}} \right)
	\left(\frac{M^2}{\eta} + \frac{\eta L^2}{2} \right)
	\sqrt{T}
\le 
	\sqrt{Q \sum_{v\in V'} \frac{q_v}{Q_v}}
	\left(\frac{M^2}{\eta} + \frac{\eta L^2}{2} \right)
	\sqrt{T}
 \;.
\]
The proof is concluded by invoking Lemma \ref{l:alpha+q-bound}.
\end{proof}

\section{Learning curves}
\label{app:exp}




We also plot the average regret $R_T/Q$ against the number $T$ of rounds. Our algorithm is the blue curve and the baseline is the red curve. Recall that these curves are averages over $20$ repetitions of the same experiment (the shaded areas correspond to one standard deviation) where the stochasticity is due to the internal randomization of the algorithms.
Experiments are designed to show the difference in performance when we allow agents to communicate and when we do not. The strong product captures in a mathematical form this difference in the regret bound for our algorithm, while the experiments here show it empirically. In particular, the bound for the case of no communication is bigger, and performances are worse in our simulations, as expected from theory.

Experiments were run on a local cluster of CPUs (Intel Xeon E5-2623 v3, 3.00GHz), parallelizing the code over four cores. The run took approximately two hours.

\begin{figure}[th]
\centering
     \begin{subfigure}[b]{0.45\textwidth}
         \includegraphics[width=\textwidth]{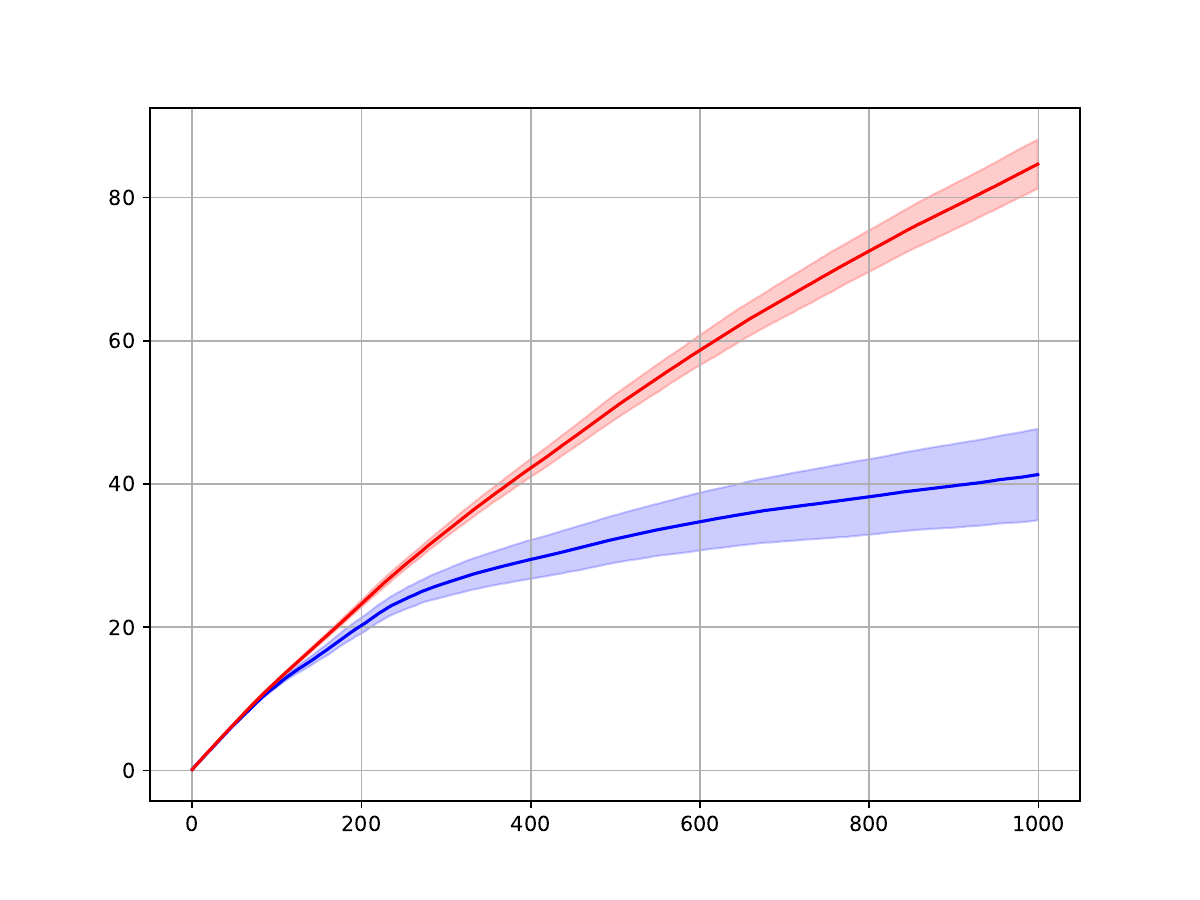}
         \caption{$p_{ER}^N=0.2$, $p_{ER}^F=0.2$}
         
     \end{subfigure}
     \hfill
     \begin{subfigure}[b]{0.45\textwidth}
         \includegraphics[width=\textwidth]{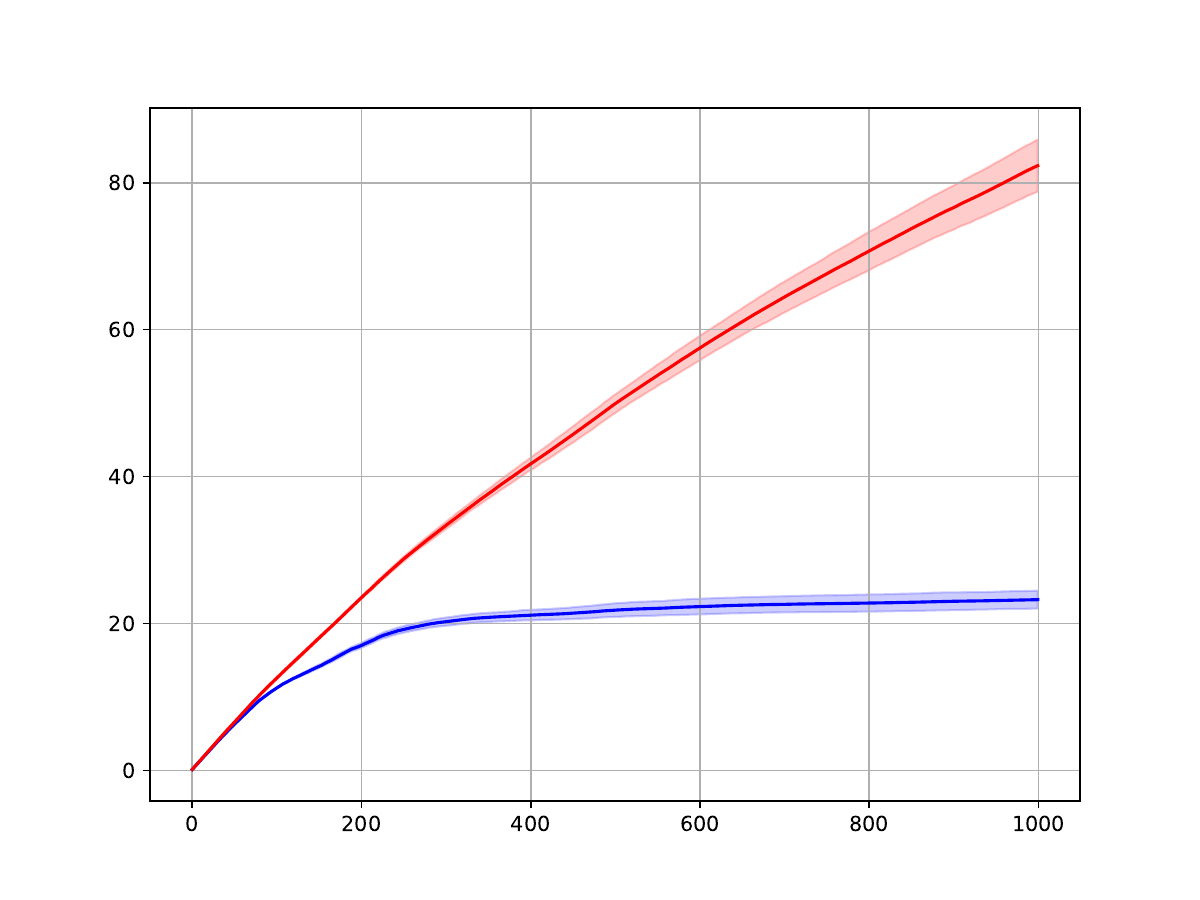}
         \caption{$p_{ER}^N=0.8$, $p_{ER}^F=0.2$}
         
     \end{subfigure}
     \hfill
     \begin{subfigure}[b]{0.45\textwidth}
         \includegraphics[width=\textwidth]{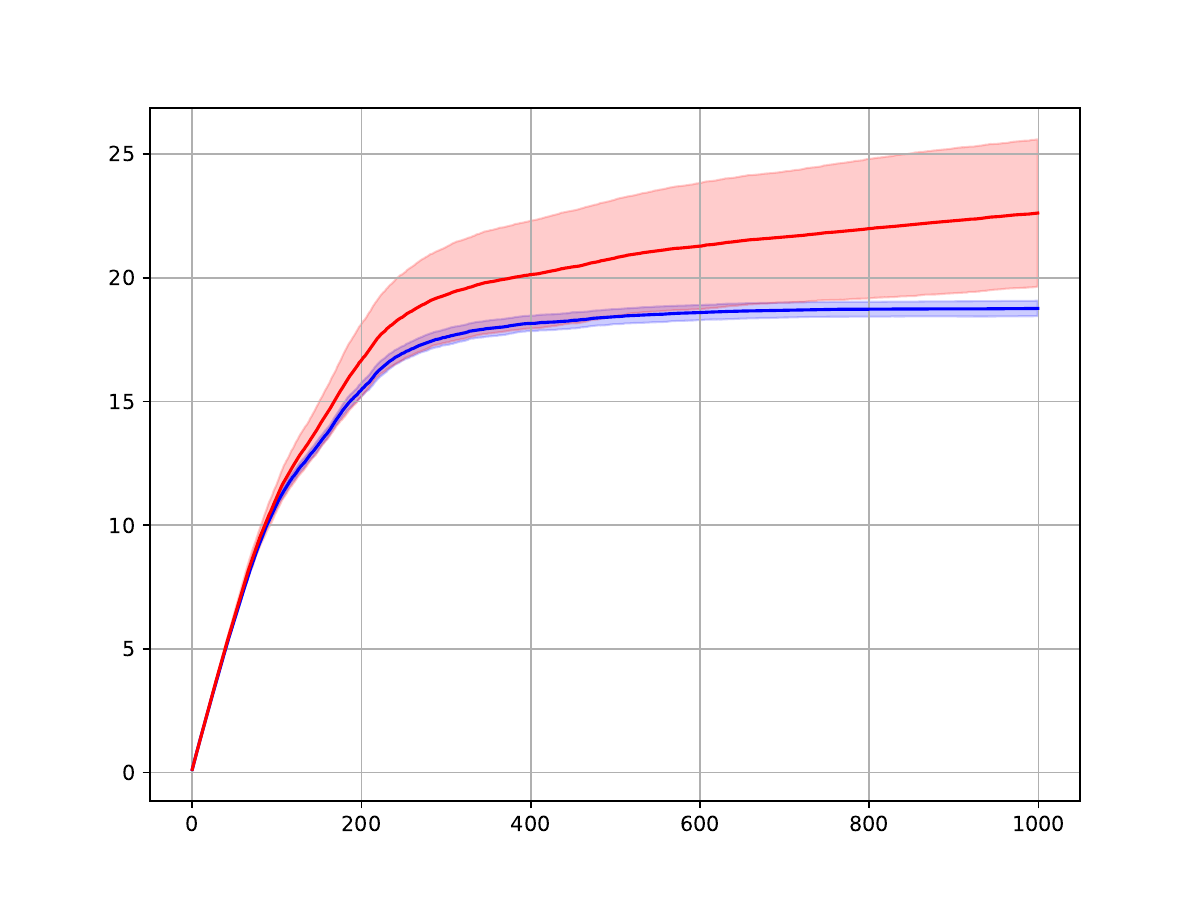}
         \caption{$p_{ER}^N=0.2$, $p_{ER}^F=0.8$}
         
     \end{subfigure}
     \hfill
     \begin{subfigure}[b]{0.45\textwidth}
         \includegraphics[width=\textwidth]{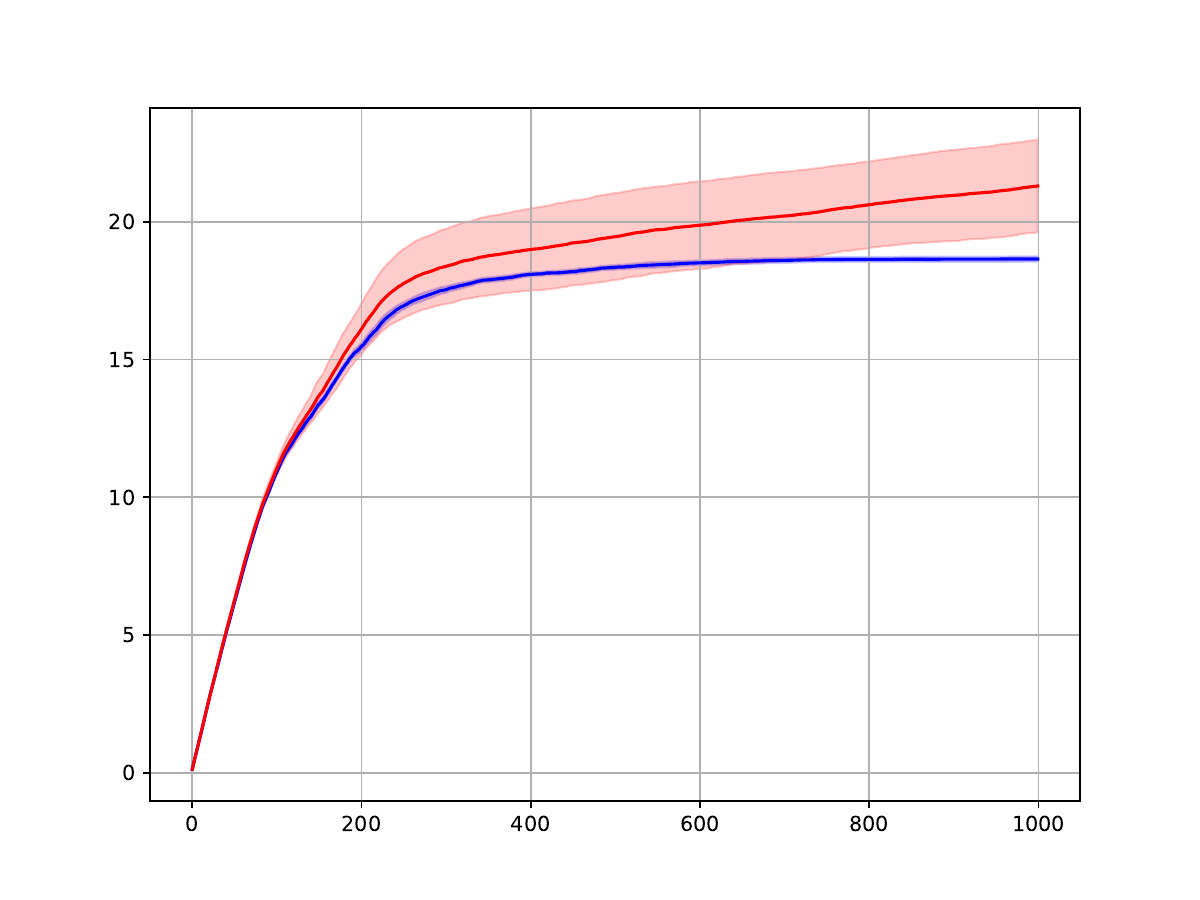}
         \caption{$p_{ER}^N=0.8$, $p_{ER}^F=0.8$}
         \label{fig:five over x d}
     \end{subfigure}
        \caption{Average regret $R_T/Q$ against $T=1000$ of rounds. Activation probability $q=1$. }
        \label{fig:q=1}
\end{figure}


\begin{figure}[th]
\centering
     \begin{subfigure}[b]{0.45\textwidth}
         \includegraphics[width=\textwidth]{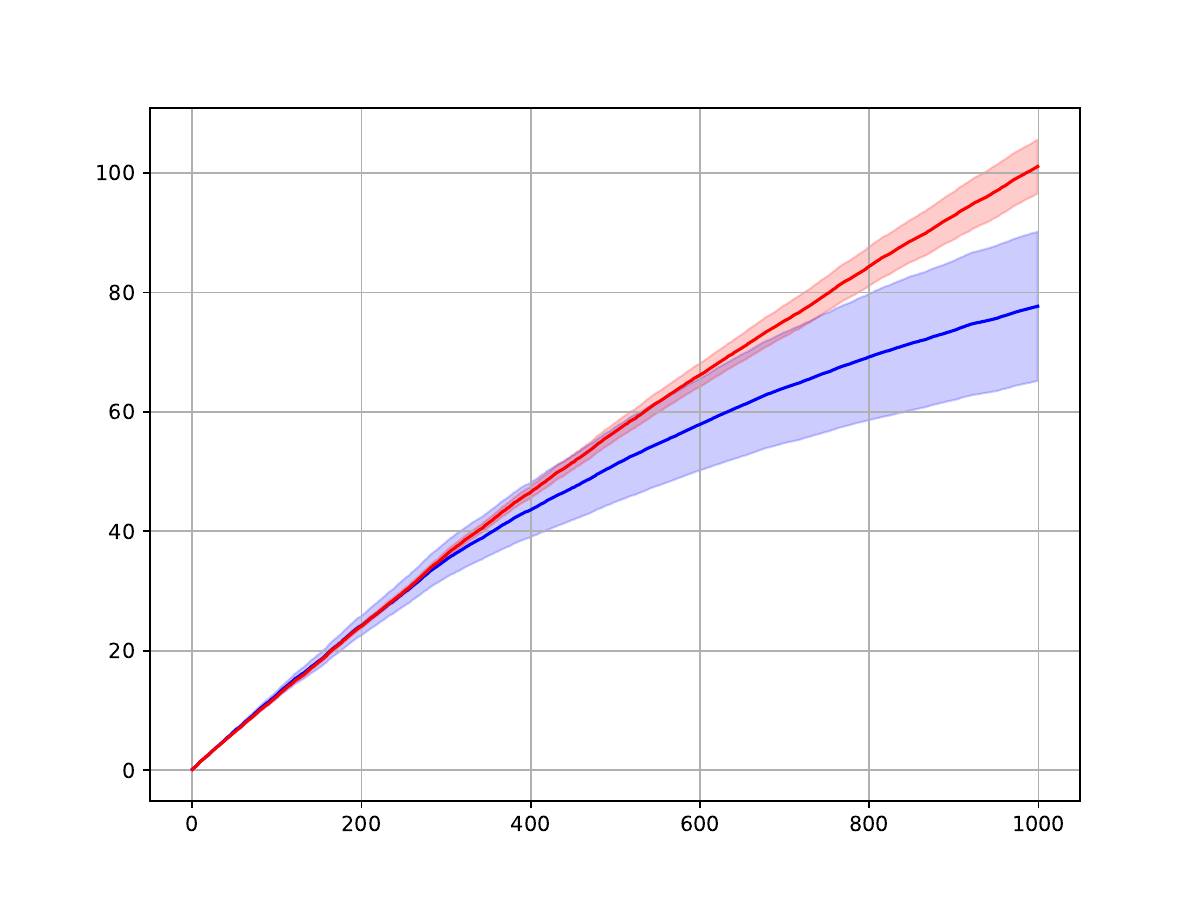}
         \caption{$p_{ER}^N=0.2$, $p_{ER}^F=0.2$}
         \label{fig:y equals x e}
     \end{subfigure}
     \hfill
     \begin{subfigure}[b]{0.45\textwidth}
         \includegraphics[width=\textwidth]{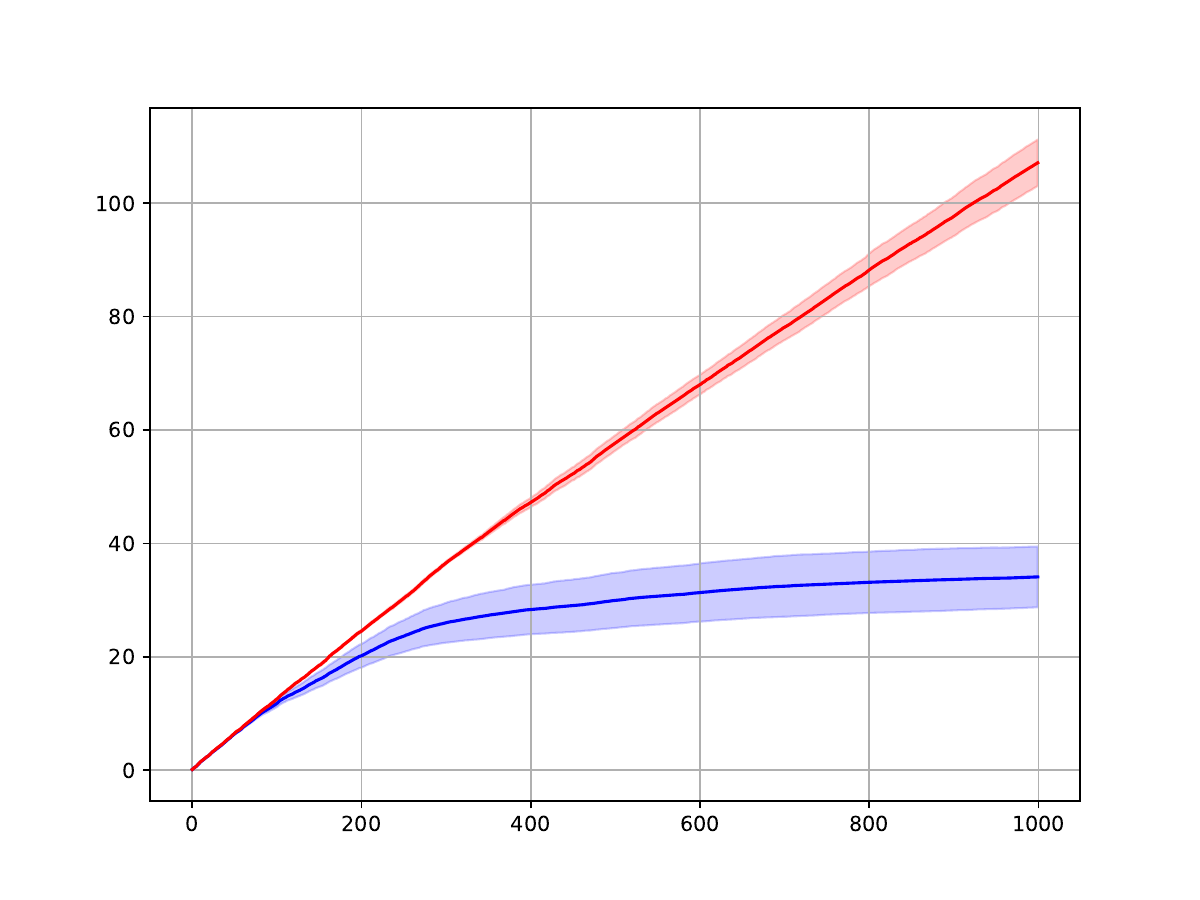}
         \caption{$p_{ER}^N=0.8$, $p_{ER}^F=0.2$}
         \label{fig:three sin x f}
     \end{subfigure}
     \hfill
     \begin{subfigure}[b]{0.45\textwidth}
         \includegraphics[width=\textwidth]{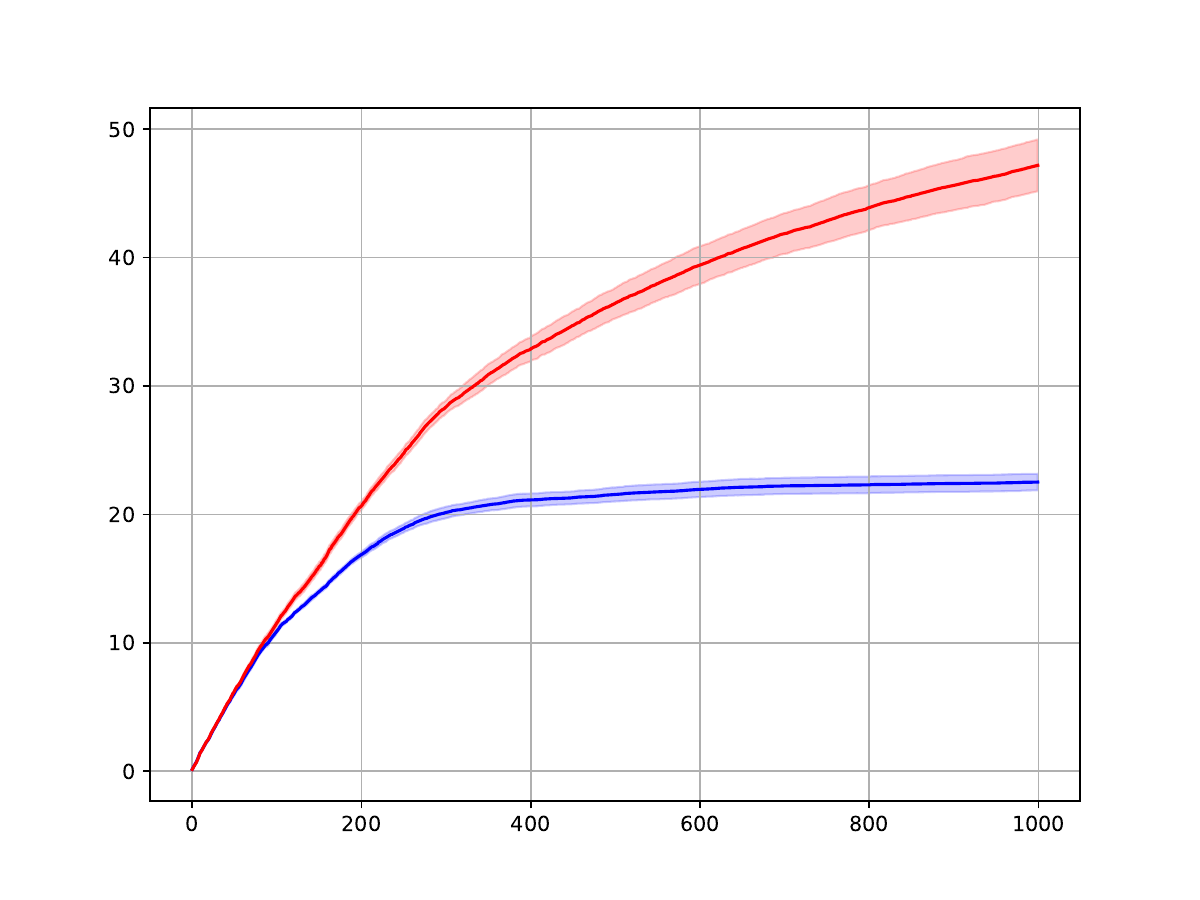}
         \caption{$p_{ER}^N=0.2$, $p_{ER}^F=0.8$}
         \label{fig:five over x g}
     \end{subfigure}
     \hfill
     \begin{subfigure}[b]{0.45\textwidth}
         \includegraphics[width=\textwidth]{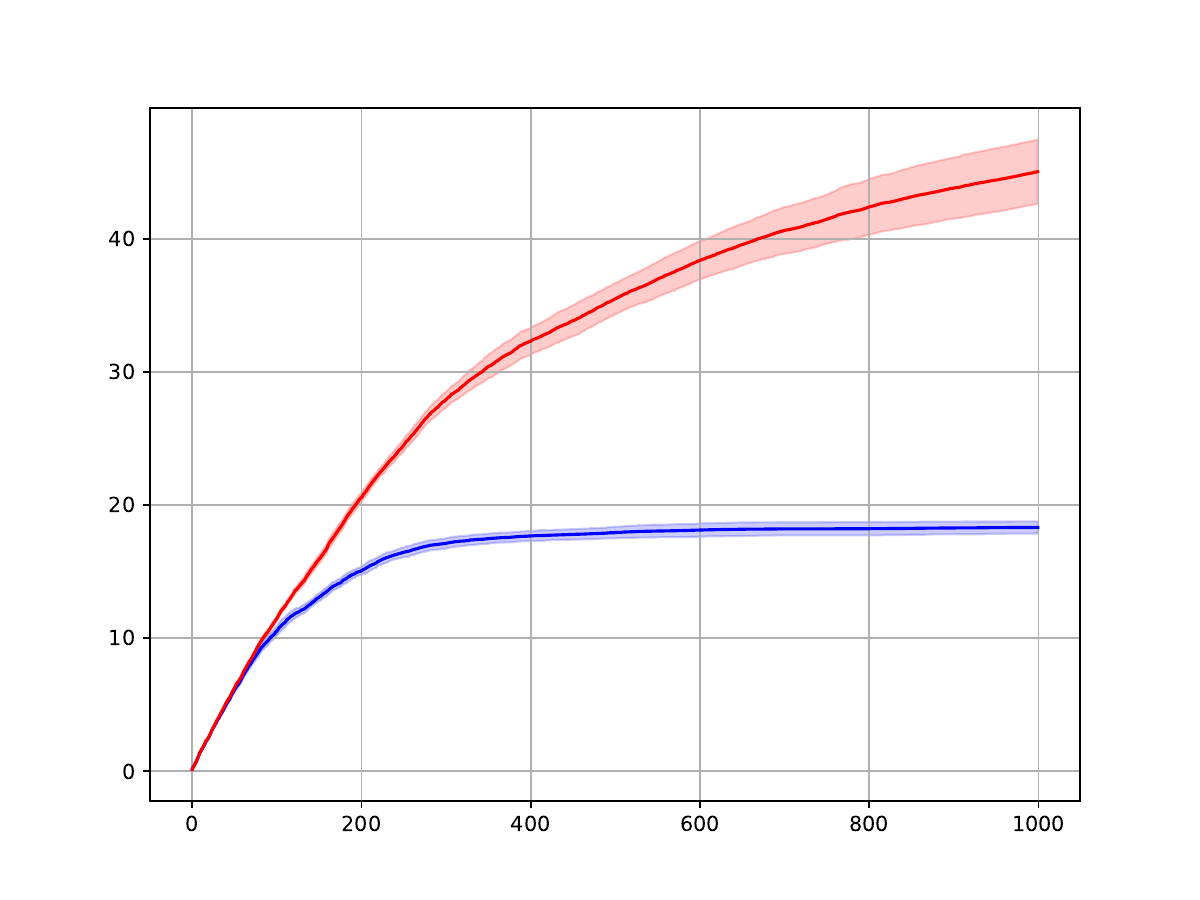}
         \caption{$p_{ER}^N=0.8$, $p_{ER}^F=0.8$}
         \label{fig:five over x h}
     \end{subfigure}
        \caption{Average regret $R_T/Q$ against $T=1000$ of rounds. Activation probability $q=0.5$. }
        \label{fig:q=0.5}
\end{figure}


\begin{figure}[th]
\centering
     \begin{subfigure}[b]{0.45\textwidth}
         \includegraphics[width=\textwidth]{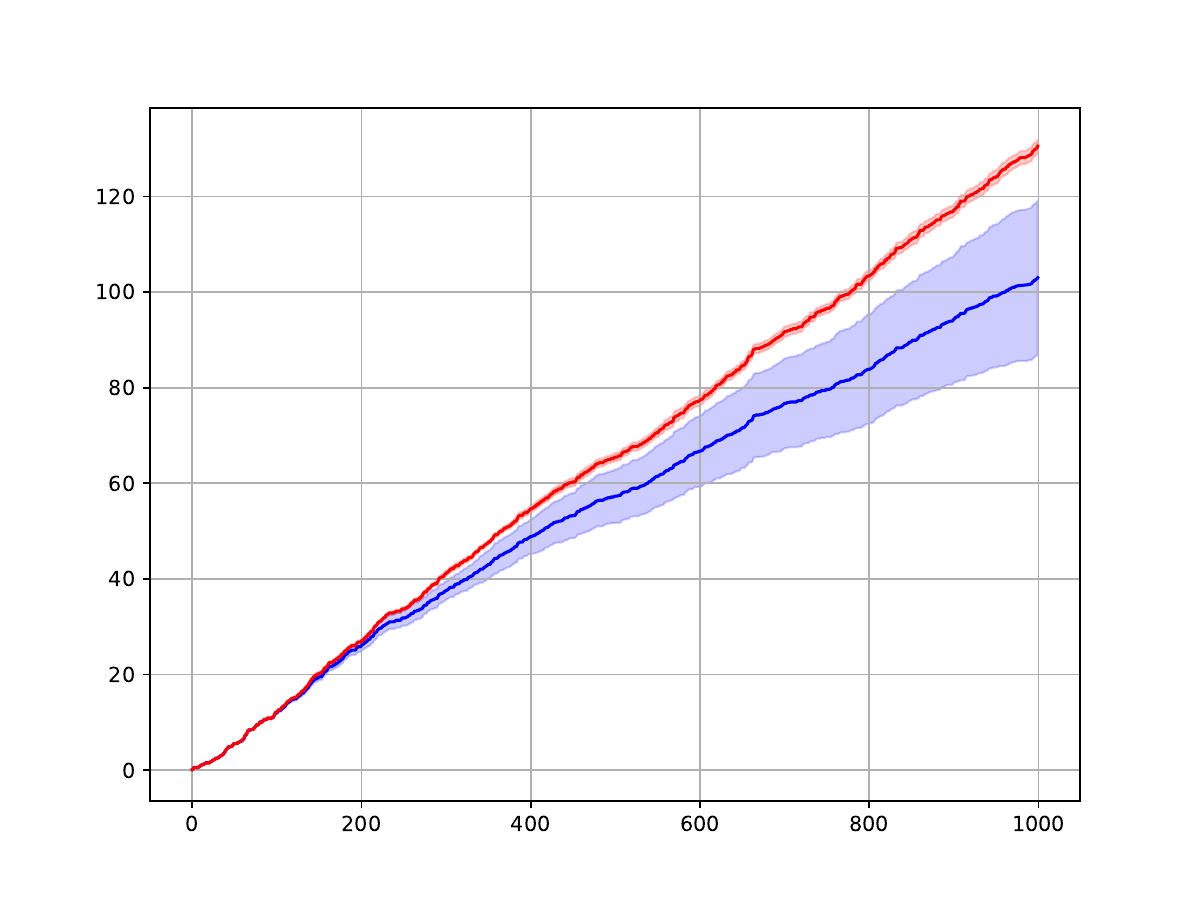}
         \label{fig:y equals x i}
         \caption{$p_{ER}^N=0.2$, $p_{ER}^F=0.2$}
     \end{subfigure}
     \hfill
     \begin{subfigure}[b]{0.45\textwidth}
         \includegraphics[width=\textwidth]{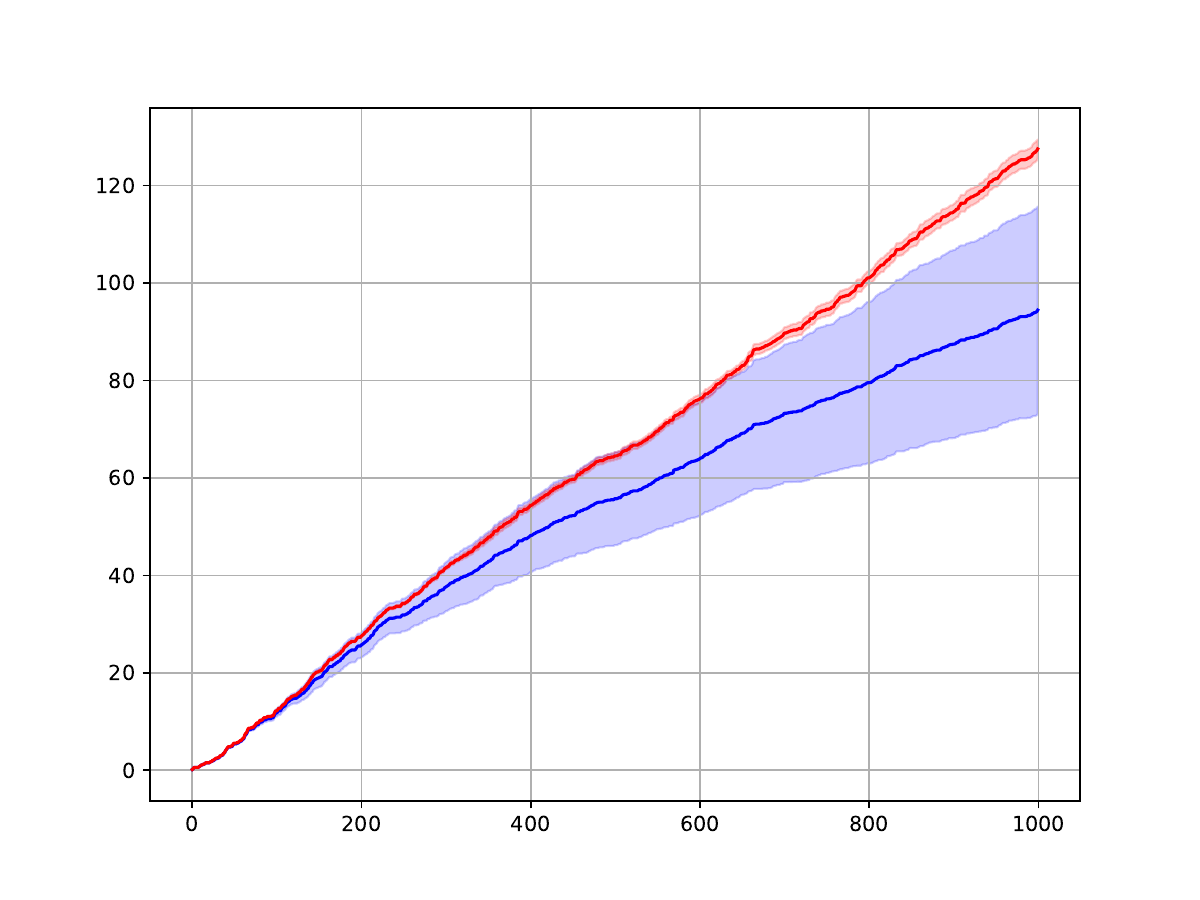}
         \caption{$p_{ER}^N=0.8$, $p_{ER}^F=0.2$}
         \label{fig:three sin x j}
     \end{subfigure}
     \hfill
     \begin{subfigure}[b]{0.45\textwidth}
         \includegraphics[width=\textwidth]{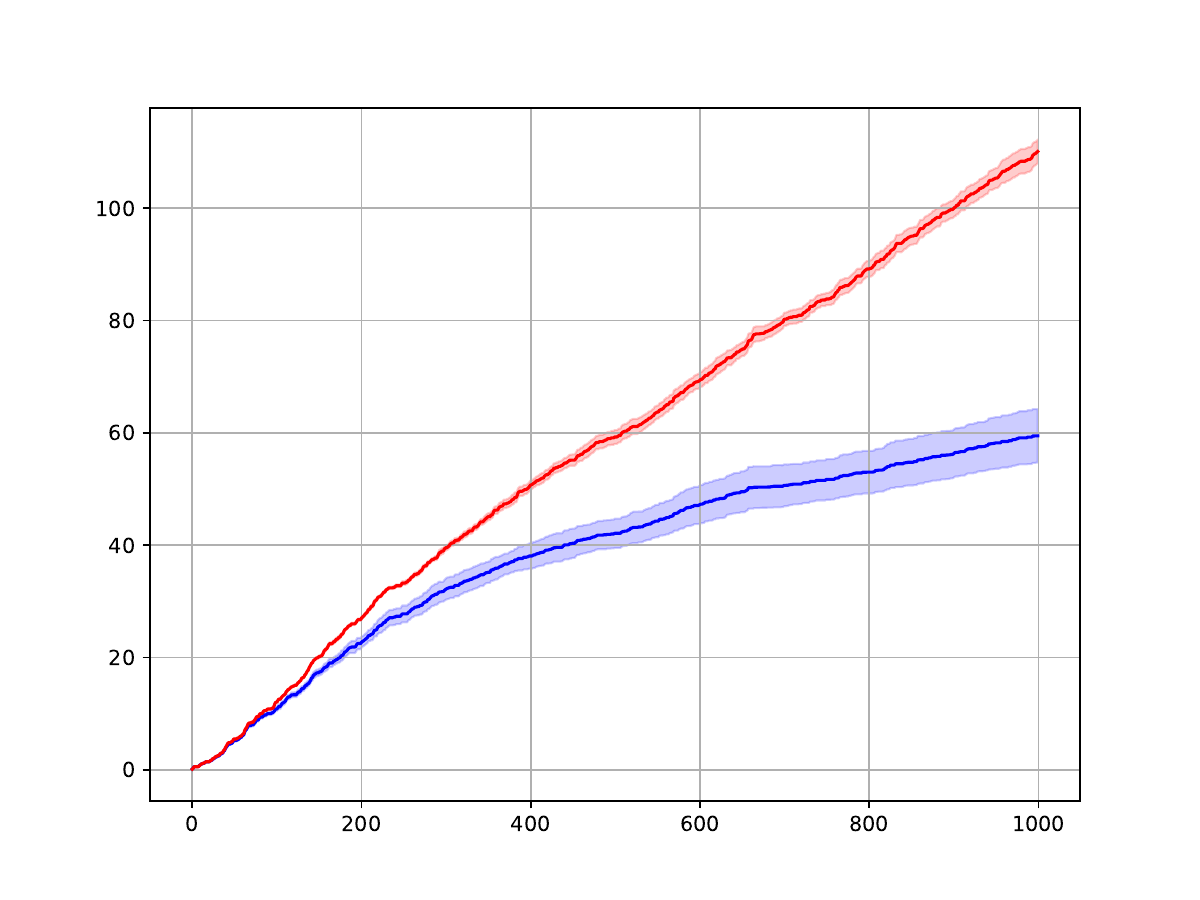}
         \caption{$p_{ER}^N=0.2$, $p_{ER}^F=0.8$}
         \label{fig:five over x k}
     \end{subfigure}
     \hfill
     \begin{subfigure}[b]{0.45\textwidth}
         \includegraphics[width=\textwidth]{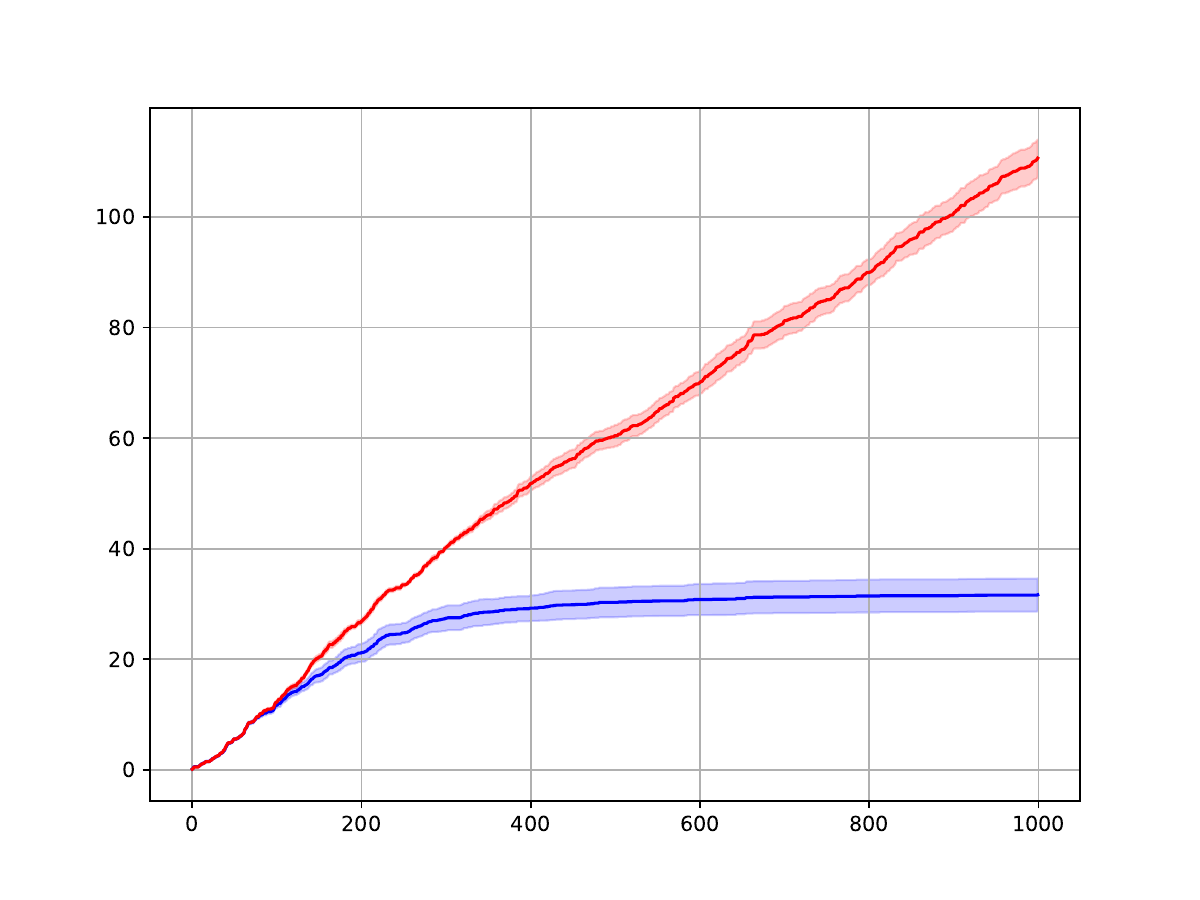}
         \caption{$p_{ER}^N=0.8$, $p_{ER}^F=0.8$}
         \label{fig:five over x l}
     \end{subfigure}
        \caption{Average regret $R_T/Q$ against $T=1000$ of rounds. Activation probability $q=0.05$. }
        \label{fig:q=0.05}
\end{figure}


\end{document}